\newtheorem{definition}{Definition}[section]
\newtheorem{theorem}{Theorem}[section]
\newtheorem{lemma}[theorem]{Lemma}
\title{Use of High Dimensional Modeling for automatic variables selection in econometric linear model, the \textit{best path algorithm} }
\author{Luigi Riso $^1$ }
\date{%
    $^1$ Università  Cattolica del Sacro Cuore,\\ luigi.riso@unicatt.it\\
    \today
}
\begin{document}
\flushbottom
\maketitle

 \begin{abstract}

This paper proposes a new algorithm for an automatic variable selection procedure in econometric linear models. Using High Dimensional Graphical Models, it is shown how the variables that are relevant for a specific variable of interest can be properly identified. 
The key of this selection procedure is mutual information which is used to measure dependencies between random variables. Several contributions in literature have investigated the use of mutual information in selecting the appropriate number of relevant features in a large data-set, but most of them have focused on binary outcomes or required high computational effort. The algorithm here proposed overcomes these drawbacks and can be also applied in a high dimensional contest insofar as it is an extension of  Chow and Liu's algorithm which was devised for explaining the relationships between a large number of the variables.  
\\
\\
\textbf{Keywords}: {Graphical Models, Chow-Liu Algorithm, Automatic Feature Selection, Mutual Information, Econometric linear models}
\end{abstract}
\section{Introduction}
The vast availability of large datasets might create problems to researcher in choosing the  explicative variables of an econometric model. In this case, the use of dimensional reduction, \cite{altman2018curse, eklund2007embarrassment}, becomes imperative to overcome drawbacks such as endogeneity of the covariates, degrees of freedom and collinearity.
In general, a dimensionality reduction tecnique can be expressed as an optimization problem, over a dataset $\textbf{X}$  \cite{saxena2008dimensionality}. In details, let $\textbf{X}$ be a $n \times p$ matrix collecting $n$ observations on  \begin{math} p\end{math} variables \begin{math} \textbf{x}_i\end{math}. 
The purpose of dimensional reduction is to obtain a new dataset \begin{math} \textbf{Y} \end{math}
 with \begin{math} {k} \end{math} variables instead of  the original \begin{math} {p} \end{math} ones, that is a dataset 
 with dimensionality \begin{math} {k} \end{math}, where \begin{math} k < p \end{math} and often \begin{math} k\ll p \end{math}.\\ There are two basic strategies used to reduce the dimensions from $p$ to $k$: feature selection and feature extraction.
Feature selection techniques focus on selecting some of the most important features from a dataset, 
while feature extraction techniques generate new features which explain either some or the entire dataset information . From an econometric point of view, feature selection (or variable selection) is more attractive than the feature extraction, because it allows to carry out a more efficient and accurate analysis by eliciting irrelevant information included in a large set of variables \cite{uematsu2019high}.\\
The novelty of this paper is to propose an automatic variable selection algorithm which makes use of Probabilistic Graphical Models  \cite{jordan2004graphical}, to detect the relevant explicative variables 
of a linear econometric model on the basis of a large dataset. The Probabilistic Graphical Model, which is used to implement the algorithm, is an extension of the \textit{Chow-Liu Algorithm} proposed  by \cite{edwards2010selecting} to find the maximum likelihood tree of mixed dataset composed of both discrete and continuous variables. As it is well known, trees and forests are special cases of undirected graphs. A forest is an acyclic undirected graph, that is, an undirected graph with no cycles. Connected components of a forest are called trees \cite{hojsgaard2012graphical}. The nodes in a tree identify the random variables of the dataset, while connections between nodes represent joint probability distributions \cite{chow1968approximating} which are determined via \textit{mutual information}.\\
In this paper, the notion of mutual information is used to select the relevant variables of an econometric model. In literature some papers  investigated the use of mutual information to select an appropriate number of relevant features \cite{battiti1994using, estevez2009normalized, kwak2002input} from a dataset, but most of these algorithms focus on binary variables and their implementation requires high computational effort \cite{kwak2002input}. One of the advantages of the algorithm here proposed is that it works well also in a high dimensional contest as it is computationally efficient in high dimensional contexts as molecular biology and other fields where are required models with hundreds to tens of thousands of variables.
 \cite{edwards2010selecting, meila1999accelerated, kirshner2012conditional}. Moreover, unlike the machine learning approach, it produces a final model whose goodness can be evaluated from an econometric point of view via fit criteria. Thus, the resulting final model, besides allowing an efficient analysis, can be properly evaluated by using statistical measures \cite{stock2002forecasting}.\\ The remainder of this paper is organized as follows. In Section \ref{2}, we recall the basic properties of  the extension of the \textit{Chow-Liu Algorithm}, in Section \ref{3}, we present the algorithm and the theory behind it, finally Section \ref{4} shows an illustrative example on real data and conclusions follow in Section \ref{5}. 

\section{Graphical and High-Dimensional Graphical Models}
\label{2}
In this section we introduce the essential of graphical and high-dimensional graphical models which prove useful in devising an algorithm to select variables in a large dataset which is dealt with in the next section.\\
Graphical Models are used to specify the conditional independence relationships between random variables of a dataset. Graphically, these  relationships are depicted as a networks of variables in a graph. 
A graph is a mathematical object \begin{math}
\textbf{G}=(V, E)
\end{math}, where $V$ is a finite set of nodes in a one-to-one correspondence with the random variables of the dataset, and \begin{math}
E \subset V \times V 
\end{math}, is a subset of ordered couples of $V$,  that defines edges or links representing the interactions between nodes \cite{jordan2004graphical}. Two generic node $u$ and $v$ in a graph $\textbf{G}=(V,E)$, are connected if there is a sequence $u=v_1,..,v_k=v$ of distinct nodes such that $(v_{i-1}, v_i) \in E $, $\forall i=1,...,k $. The sequence $u=v_1,..,v_k=v$ is called \textit{path} \cite{de2009high}.
\\ In this analysis we make use of \textit{High-Dimensional Graphical Models}, which prove useful to represent the relationships between a large set of variables. Indeed, we consider a dataset composed by \begin{math}
\textit{n}
\end{math} observations on \textit{p} random variables  \begin{math}
\textbf{X}_p
\end{math} that are collected in a   \begin{math} 
N \times  p
\end{math} matrix \begin{math}
\textbf{X}
\end{math}. 
We assume that these \begin{math}  
p
\end{math} variables are split into two sets: a set of \textit{d} discrete  \begin{math} \textbf{Z}=(\textit{Z}_1,...,\textit{Z}_d) \end{math} and a set of \textit{q} continuous  \begin{math}
\textbf{Y}=(\textit{Y}_1,...,\textit{Y}_q)
\end{math}  variables.
Accordingly, the \textit{i}-observation of the dataset \begin{math}
\textbf{X}=(\textbf{Z},\textbf{Y})\end{math} can be expressed as \begin{math}
(z_i,y_i).\end{math} with $z_i$ and $y_i$ denoting the $i_th$ observation of the variables $\textit{Z}_i$ and $\textit{Y}_i$, respectively.\\
Given the one-to-one correspondence between variables and nodes, we can write the sets of nodes \begin{math}
V
\end{math} as \begin{math}
V=\{ \Delta \cup \Gamma \}
\end{math} where \begin{math}
\Delta
\end{math} and \begin{math}
\Gamma
\end{math} are the nodes corresponding to the variables in \begin{math} \textbf{Z}\end{math} and \begin{math} \textbf{Y}\end{math}, respectively. \\
In the following, we restrict our attention to discrete random variables,  and denote with $z=(z_1,...,z_d)$ the generic observation (or cell) of $\textbf{Z}$. In this case the set, $\mathcal{Z}$, of the possible cells of the variables $\textbf{Z}$, also called \textit{levels}, may be labelled as  $1,...,|Z_v|$. Now, we assume that the cell probabilities factorize according to a an unknown tree $\tau$  as  follows $\textbf{G}_Z=(\Delta, E_{\Delta})$, where $\Delta$ and $E_{\Delta}$ are the vertex and the edges set, respectively.  Accordingly, the cell probabilities can be written as follows
\begin{equation}
p(z)=\prod_{e \in E_{\Delta}} g_e(z) 
\end{equation}
for a given function $g_e(z)$ that depends on the variables included in the edges set $e$.\\ Should $e=(Z_u,Z_v)$, then $g_e(z)$ would be only a function of $z_u$ and $z_v$ \cite{edwards2010selecting} and the cell probabilities would take the form (\cite{chow1968approximating}) 

\begin{equation}\label{eq:l1}
    p(\textbf{z}| \tau)= \frac{\prod_{u,v \in E_{\Delta}} p(z_u,z_u) }
    {\prod_{v \in V} p (z_v)^{d_v-1}}=
    \prod_{v \in V} p (z_v)  \prod_{u,v \in E_{\Delta}} \frac{p(z_u,z_u)}{p(z_u)p(z_v)}
\end{equation}
where $d_v$ is the degree of $v$, that is the number of edges incident to the node $v$.\\ In light of \eqref{eq:l1}, the maximized log-likelihood, up to a constant, turns out to be
\begin{equation}
\sum _{(u,v) \in E_{\Delta}} I_{u,v}
\end{equation}
The quantity $I_{u,v}$, called \textit{mutual information}, is defined as follows
\begin{equation*}
    I_{u,v}= \sum_{z_u,z_v} n(z_u,z_v) \ln \frac{n(z_u,z_v)}{n(z_u) n(z_v)}
\end{equation*}
 where $n(z_u,z_v)$ is the number of observations with $Z_u=z_u$ and $Z_v=z_v$. \\
\cite{lewis1959approximating} defined the mutual information between two variables as a measure of their closeness.\\ 
The maximum likelihood tree for the entire set
\textbf{\textit{Z}} 
of the\textit{d} discrete random variables can be obtained by computing 
$I_{u,v}$ as edges weights on the complete graph with vertex set $\Delta$ using a maximum spanning tree algorithm \cite{chow1968approximating} . \\
It can be proved that $I_{u,v}$ is one half the usual statistic $G$ of the likelihood ratio test for marginal independence of $Z_u$ from $Z_v$, that is
 \begin{equation}\label{eq:test}
 2 I{u,v}=G 
 \end{equation}
 which is calculated by using the table of count $\{n(z_u,z_v)\}$ built by cross-tabulating $Z_u$ and $Z_v$. 
 Under the marginal independence $G$ has an asymptotic $\chi^2_{(k)}$ distribution, with k degrees of freedom, where  $k=(|X_u|-1)(|X_v|-1)$  
is the number of parameters tested under the null \cite{edwards2012introduction}.\\
 Following a specular approach, we can obtain the maximum likelihood tree $\textbf{G}_Y=(\Gamma, E_{\Gamma})$ for a set $\textbf{Y}$ of continuous random variables that can be assumed to have a multivariate Gaussian distribution. Here the sample mutual information between two margins $Y_u$ and $Y_v$ is given by
 \begin{equation*}
     I_{u,v}=-N\ \frac{ln (1-\hat{\rho}^{2}_{u,v})}{2}
 \end{equation*}
where $\hat{\rho}_{u,v}$ is the sample correlation between $Y_v$ and $Y_u$. As before, the mutual information is related to the likelihood ratio test statistic as in \eqref{eq:test}.
Under marginal independence between $Y_u$ and $Y_v$, $G$  has a $\chi^2_{(1)}$ distribution \cite{edwards2012introduction} \\
The \cite{chow1968approximating} algorithms, which are finalized to find the maximum weight spanning tree
of a arbitrary undirected connected graph with positive
edge weights, have been studied thoroughly . 
In this regard, the algorithm  due to \cite{kruskal1956shortest} provides a simple and efficient solution to this problem. Starting with a null graph, it proceeds by adding at each step the edge with the largest
weight that does not form a cycle with the ones already chosen.
 \cite{edwards2010selecting} proposed an extension of the Chow-Liu Algorithm that can be applied with mixed dataset $\textbf{X}$. This algorithm relies on the use of mutual information between a discrete variable $Z_u$ and a continuous variable $Y_v$ and it is characterized by the fact that the marginal model is a simple
ANOVA model (section 4.1.7 \cite{edwards2012introduction}).\\
 In order to find the mutual information $I(z_u,y_v)$ between each couple of variables when dealing with mixed variables, it is important to distinguish the case when the variance of $Y_v$ is distributed homogeneously across the levels of the discrete variable $Z_u$ from that when it is heterogeneously distributed \cite{edwards2012introduction}.
In the homogenous case,  if we denote with $
\{ n_i, \bar{y}_v, s_i^{(v)}\}_{i=1,...,|Z_u|}$ the sample cell count, mean and  variance of the couple of variables 
$(Z_u, Y_v)$, an estimator of mutual information  is given by:
\begin{equation*}
    I(z_u,y_v)=\frac{N}{2} \log \left(\frac{s_0}{s} \right),
\end{equation*}
where  
$s_0=\sum_{k=1}^N(y_v^{(k)}-\hat{y}_v)/N,
\quad
s=\sum_{i=1}^{|Z_u|}n_is_i/N
$, and $
k_{z_u,y_v}= |Z_u|-1
$ are the degrees of freedom of the test for marginal independence between the discrete variable $Z_u$ and the continuous variable $Y_v$.\\
In the heterogeneous case an estimator of the mutual information is given by \begin{equation*}
    I(z_u,y_v)=\frac{N}{2} \log (s_0) -\frac{1}{2} \sum_{i=1,...,|Z_s|} n_i \log (s_i)
\end{equation*}
where $
k_{z_u,y_v}= 2(|Z_u|-1) 
$ are degrees of freedom of the test for the marginal independence between the discrete variable $Z_u$ and continuous variable $Y_v$, with statistic specified as in \eqref{eq:test} and with
a $\chi^2_{(k_{u,v})}$ distribution.\\
 \cite{edwards2010selecting} suggested also the use of one of the following measures to avoid the inclusion of links not supported by data
 \begin{equation}
I^{AIC}=I(x_i,x_j)-2k_{x_i,x_j} \end{equation} or
\begin{equation}
I^{BIC}=I(x_i,x_j)-\log(n) k_{x_i,x_j}
\end{equation}
where \begin{math}
k_{x_i,x_j}
\end{math} are the degrees of freedom associated with the pair of variables, that are defined according to the nature of the variables involved.\\
The above measures are employed in an algorithm to find the best-spanning tree. The algorithm stops once the maximum number of edges has been included in the graph.\\ 
Finally, it is worth remembering that High-Dimensional Graphical Models are strong decomposable (section 7.4 of \cite{hojsgaard2012graphical}). Strongly decomposabilty is a useful property of graphical models  \cite{lauritzen1989graphical} as it allows to restrict the analysis to sub-regions or sub-models of the graph which are local strongly decomposable models with minimal AIC/BIC \cite{lauritzen1989graphical, edwards2010selecting}.\\
A mixed graphical model is strongly decomposable if and only if it is acyclic and contains no \textit{forbidden paths} \cite{lauritzen1989graphical}. A forbidden path is a path between two non-adjacent discrete vertices passing through continuous vertices (more details \cite{lauritzen1996graphical}, p 7-12). Now, High-Dimensional Graphical Models are strong decomposable as, on the one hand, trees and forests are acyclic and, on the other hand, the \cite{edwards2010selecting} algorithm avoids the presence of forbidden paths.  

\section{The Best Path Algorithm }
\label{3}
To introduce the algorithm, let us consider a dataset $\textbf{D}$ composed of  $n$ observations on $p$ variables, with $ p\ll n$. Accordingly, a generic $i-$ observation of the dataset is a vector including the $i$-th observation on the random variables $\{ Y_{i},X_{i,1},...,X_{i,p-1}\}$. The best path algorithm requires the construction of a High-Dimensional Graphical Model $G=(V,E)$ for the dataset $\textbf{D}$. The High-Dimensional Graphical Model is built  to select among the variables $\{X_{1},...,X_{p-1}\}$ the minimal subset that is  really effective to explain $Y$. To this end, we introduce the following preliminary definitions

\begin{definition}[Distance]
\label{distance}
Let us assume that a path  exists between the nodes $Y$ and $X_j$, that is
\begin{equation}
Y=X_{j+1},...,X_{j+k}=X_j
\end{equation}
Then, we define distance of order $k$, $d_{Y,X_j}^{(k)}$, the number $k$ of the nodes present in the path from $Y$ to $X_j$. It is simple to note that:
\begin{equation}
    d_{Y,X_j}^{(k)}=n_{(X_{j+1},...,X_{j+k})}=d_{X_j,Y}^{(k)}
\end{equation}
where $n_{(X_{j+1},...,X_{j+k})}$ is the number of the nodes included in the path.
\end{definition}
Note that, according to Definition \ref{distance}, it is not possible to compute the distance  between nodes belonging to different trees.

\begin{definition}[Path Steps]
\label{path-s}
A path step $w_k$, with $k=1,..., max\{d_{Y,X_j}\}$,
for the node $Y$, is the sub set of variables $X_i$ with distance from $Y$ is equal or lower than $k$, that is 
\begin{equation}
  \textbf{X}_{w_k}= \{d_{Y,X_i} \leq k\}
\end{equation}
The path step $w_k$ corresponds to the sub-regions of the graph $\textbf{G}=(V,E)$ that satisfies the following property
\begin{equation}
   w_k=\textbf{G}_k=(V_{d_{Y,X_i} \leq k},E_{d_{Y,X_i} \leq k})
\end{equation}
\end{definition}
In the following we provide an example of the above definitions \ref{distance} and \ref{path-s}. \\
Let us consider graph \ref{fig1} and assume that 
$Y$ is the node of interest. In order to detect which subset of the $X_j$ variables are effective in explaining $Y$, we operate an initial selection  by considering only those variables that  
compose the tree to which $Y$ belongs. Indeed, as mutual information between $Y$ and the variables non included in its tree can be presumed to be approximately null, the  hypothesis of marginal independence between $Y$ and these variables can be assumed to hold.
Thus, given the structure of graph \ref{fig1}, the following holds
\begin{equation}
I(Y, X_{13}) \approx I(Y, X_{12})\approx I(Y, X_{11}) \approx 0
\end{equation}
and, accordingly, we can assume the hypothesis of marginal independence between $Y$ and $X_j$, $j=12,13,14$ \cite{edwards2012introduction}. Consequently, the latter variables can be discarded from the process of feature selection.

\begin{figure}[H]\par\medskip
\centering
\includegraphics[scale=0.60]{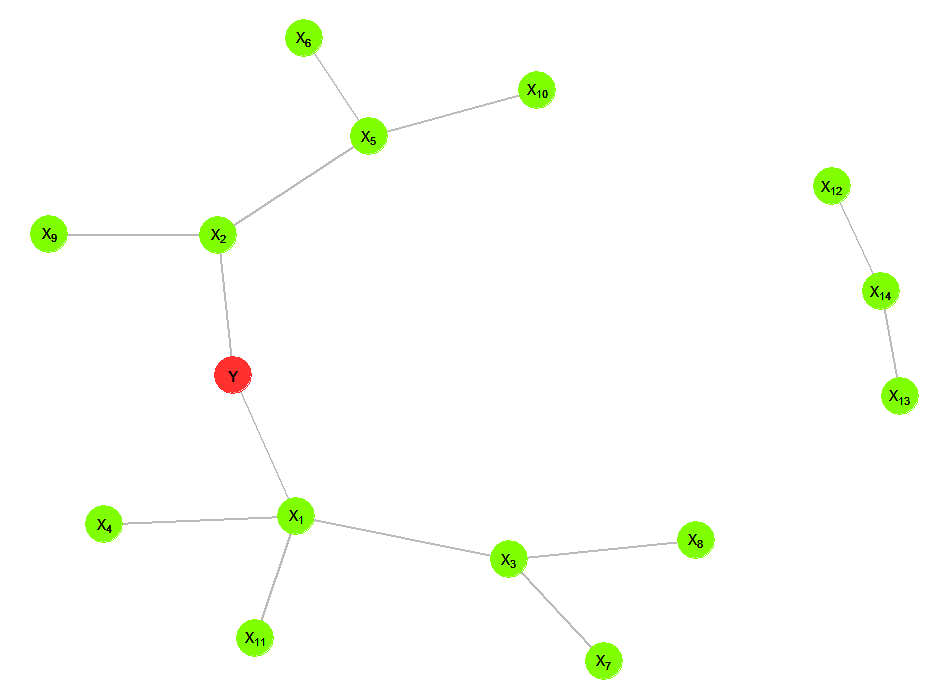}
\caption{Example min BIC forest, from a dataset $\textbf{D}$ with $p=15$ variables and $n$ observation }
\label{fig1}
\end{figure}

Now, let us consider the distances of increasing order between the node $Y$ and the variables $X_{j}$, $j=1,...11$ belonging to its same tree
\begin{equation}
\label{dis}
\centering
\begin{split}
  d_{Y,X_1}^{(1)}=d_{Y,X_2}^{(1)} ;\\
  d_{Y,X_9}^{(2)}=d_{Y,X_4}^{(2)}=d_{Y,X_{11}}^{(2)}=d_{Y,X_5}^{(2)}=d_{Y,X_3}^{(2)} ;\\
  d_{Y,X_7}^{(3)}=d_{Y,X_6}^{(3)}=d_{Y,X_{10}}^{(3)}=d_{Y,X_8}^{(3)}
    \end{split}
    \end{equation}
These distances can be used to compute the \textit{path steps} associated to the node $Y$
\begin{equation}
    \begin{split}
        w_1=\{ X_1, X_2 \};\\
        w_2=\{ X_1, X_2, X_3, X_4, X_5, X_9, X_{11} \}; \\
        w_3=\{ X_1, X_2, X_3, X_4, X_5, X_9, X_{11}, X_6, X_7, X_8, X_{10} \}
    \end{split}
\end{equation}
which include the potential subsets of variables to consider in the feature selection for $Y$.\\ The following theorem \ref{T} explains how to determine the best \textit{path step} which contains the variables that are potentially relevant for the variable of interest.

\begin{theorem}[The best path step ]
\label{T}
In High-Dimensional Graphical Models $\textbf{G}=(V,E)$ a path step $w_i$  that satisfies the following property 
\begin{equation}\label{eq:tbp}
    \sum_{\textbf{X} \in w_i} I(Y, \textbf{X})  \geq \max \left\{   \sum_{\textbf{X} \in w_j} I(Y, \textbf{X}) \right\},   i \ne j, j=1,..,k
\end{equation}
always exists for the node of interest $Y$. The path step $w_i$ is the best path step, insofar as it includes the variables that maximize the mutual information with $Y$
\end{theorem}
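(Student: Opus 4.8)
The plan is to reduce the claimed existence to an elementary monotonicity argument over a finite, nested family of sets, exploiting the non-negativity of mutual information. First I would make precise the order structure of the path steps. By Definition \ref{path-s}, the path step $w_m$ is the collection $\mathbf{X}_{w_m}=\{X_i : d_{Y,X_i}\leq m\}$, so membership is governed by the threshold condition $d_{Y,X_i}\leq m$. Since $d_{Y,X_i}\leq m$ implies $d_{Y,X_i}\leq m+1$, the path steps form an increasing chain
\begin{equation*}
w_1 \subseteq w_2 \subseteq \cdots \subseteq w_k,
\end{equation*}
where $k=\max\{d_{Y,X_j}\}$ is the largest distance from $Y$ to a node in its own tree. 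This nesting is the structural fact that drives the whole argument, and it is worth stating explicitly before touching the mutual-information sums.

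Second, I would invoke the non-negativity of mutual information. For any pair of variables, $I(Y,\mathbf{X})\geq 0$, with equality precisely under marginal independence; this is consistent with the identity $2I_{u,v}=G$ in \eqref{eq:test}, since the likelihood-ratio statistic $G$ is non-negative. Writing $S_m := \sum_{\mathbf{X}\in w_m} I(Y,\mathbf{X})$ for the total mutual information carried by the $m$-th path step, the nesting from the first step lets me decompose
\begin{equation*}
S_{m+1} = S_m + \sum_{\mathbf{X}\in w_{m+1}\setminus w_m} I(Y,\mathbf{X}) \geq S_m,
\end{equation*}
because the added summands are each non-negative. Hence $S_1\leq S_2\leq\cdots\leq S_k$, that is, the sequence of path-step scores is monotone non-decreasing.

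Third, I would conclude. The collection $\{S_1,\dots,S_k\}$ is a finite set of real numbers, so it certainly attains a maximum; this already yields the asserted existence of a path step $w_i$ satisfying \eqref{eq:tbp}. Moreover, by the monotonicity just established, the maximum is attained at the terminal index, so the best path step may be taken to be $w_k$, the full path step spanning the tree containing $Y$. This matches the theorem's description of $w_i$ as the path step that maximizes the mutual information with $Y$.

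The step I expect to be the main obstacle is not any single computation — the argument is essentially that one is maximizing a monotone sequence over a finite index set — but rather pinning down exactly what is being claimed. If the intended objective were one of the penalized scores $I^{AIC}$ or $I^{BIC}$ rather than the raw $I(Y,\mathbf{X})$, the incremental contributions over $w_{m+1}\setminus w_m$ would carry a penalty and need no longer be non-negative, so monotonicity would fail and the maximizer would generally be an interior $w_i$ rather than $w_k$; existence, however, would still follow from finiteness alone. I would therefore be careful to state and prove the result for the raw mutual information exactly as written, and flag the penalized variant as the case in which the selection genuinely discriminates among path steps.
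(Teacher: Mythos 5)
Your proof is correct and its core coincides with the paper's: both arguments rest on the nesting $w_1 \subseteq w_2 \subseteq \cdots \subseteq w_k$ of path steps and the non-negativity of mutual information, which together make the sums $S_m = \sum_{\textbf{X} \in w_m} I(Y,\textbf{X})$ monotone non-decreasing (the paper reaches this same monotone chain by decomposing the total tree mutual information into distance classes, which is your nesting argument in different clothing). Where you diverge is at the conclusion. You take the theorem at face value and observe that a finite monotone sequence attains its maximum at the terminal step $w_k$ --- which makes the statement literally true but trivially so, and non-discriminating among path steps. The paper instead stops short of naming $w_k$ and appends an informal criterion: $w_i$ is declared best when $\sum_{w_{i-1}} I(Y,\textbf{X}) < \sum_{w_i} I(Y,\textbf{X}) \approx \sum_{w_{i+1}} I(Y,\textbf{X}) \approx \cdots \approx \sum_{\textbf{X} \in \max(w)} I(Y,\textbf{X})$, i.e.\ the smallest step at which the cumulative information plateaus. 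That condition is not a rigorous consequence of the hypotheses (a strict plateau need not exist, and ``$\approx$'' is never quantified), but it is the operational rule the algorithm actually implements in Steps 2--3, where adjusted $R^2$ under cross-validation plays the role of the plateau detector. Your closing caveat about the penalized scores $I^{AIC}$ and $I^{BIC}$ pinpoints exactly why the paper needs that informal addition: under raw mutual information the maximization problem is vacuous (always solved by $w_k$), and only a penalty --- or the paper's unquantified $\approx$ --- makes the selection of an interior $w_i$ meaningful. So your account is, if anything, the more honest rendering of what the stated theorem proves; what it omits is the paper's (non-rigorous) plateau characterization, which is where the actual selection content lives.
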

\begin{proof}
To prove the existence of a path step satisfying \eqref{eq:tbp}, let us consider the sum of the mutual information between the variable $Y$ and the set of the variables $\{X_1,...,X_h\}$ belonging to its same tree 
\begin{equation} \label{eq:bp}
    I(Y, \textbf{X})= I(Y,X_1)+I(Y,X_2)+ \dots+ I(Y,X_h) 
\end{equation}
Formula \eqref{eq:bp} can be rewritten as follows
\begin{equation}
\label{8}
    I(Y, \textbf{X}) = \sum_{d_{Y,X}^{(1)}} I(Y,\textbf{X})+\sum_{d_{Y,X}^{(2)}} I(Y,\textbf{X})+\dots +\sum_{max\{d_{Y,X}\}}I(Y,\textbf{X})
\end{equation}
where $\sum_{d_{Y,X}^{(k)}}$ is the sum of the mutual information between $Y$ and the variables $X_j$ which are $k=1,2...$ nodes distant from $Y$ (see definition \ref{distance}).\\
According to definition \ref{path-s}, the above formula can be written in terms of variables  belonging to the paths step $w_k$, $k=1,2...$.
According to Definition 3.2, we can approximate the mutual information between $Y$ and all the variables $\textbf{X}$ as follows:
\begin{equation}
\label{9}
    I(Y, \textbf{X}) = \sum_{\textbf{X} \in {max(w)}}I(Y,\textbf{X})
\end{equation}
Now, as mutual information is a non-negative measure satisfying the following property
\begin{equation}
\sum_{\textbf{X} \in w_1 }I(Y,\textbf{X}) \leq \sum_{\textbf{X} \in w_2 }I(Y,\textbf{X}) \leq \dots \leq \sum_{\textbf{X} \in {max(w)}}I(Y,\textbf{X})
\end{equation}
the $i$-th path step $w_i$ turns out to be the best one if the mutual information between $Y$ and the variables included in the $i$-th path  step, is significant higher than the mutual information included in lower path steps ($w_j, j<i$), but approximately equal to that included in higher path steps ($w_j, j>i$), that is
\begin{equation}
\sum_{w_{i-1}} I(Y,\textbf{X}) < \sum_{w_{i}} I(Y,\textbf{X})\approx \sum_{w_{i+1}} I(Y,\textbf{X})\approx...\approx \sum_{\textbf{X} \in {max(w)}}I(Y,\textbf{X})
\end{equation}
\end{proof}
\begin{lemma}
After proving the presence of a best path step in every High Dimensional graphical model and explaining the property of the best path in terms of mutual information, let us see how this notion can be properly connected to statistical measure employed to assess the goodness of fit of linear econometric model. To this end, it is important to first establish a relationship between mutual information and  \textit{Adjusted R-squared} $\bar{R}^2$
\begin{equation}
\label{R2}
    \bar{R}^2=1-\frac{\sum_{j=1}^n(Y_j-\hat{f}(X_j))^2}{n-o-1}\frac{n-1}{\sum_{j=1}^n(Y_j-\bar{y})^2}
\end{equation}
where $\hat{f}(X_j)$ is the estimated value for the $j$-th observation of $Y$ from a model explaining this variable, $\bar{y}$ the sample mean of the latter, $o$ the number of explanatory variables and $n$ the number of observations. As is well known, $\bar{R}^2$ allows 
to quantify the extent to which regressors account for,  or explain, variation of the dependent variable \cite{stock2012introduction}. \\
Now, consider two sets of variables $G_{i}=(X_1,...X_{j})$ and $G_{i+1}=(X_{1},..,X_{j}, X_{q})$ and assume that 
\begin{equation}\label{eq:lk}
    \mathcal{L}(Y|X_1,...X_{q})=\mathcal{L}(Y|X_1,...X_{j})
\end{equation}
where $\mathcal{L}(Y|X_1,...X_{q})$ stands for the conditional distribution of $Y$ relative to $X_1,...X_{q}$. According to \eqref{eq:lk}, $X_1,...X_{j}$ can be regarded as an effective set of variables to explain $Y$, in other words the set $(X_{j+1},.., X_{q})$ doesn't add information to explain the outcome $Y$ \cite{causeur2003linear}. It follows that
\begin{equation}
\sum_{X \in G_{i}}I(Y, X) \approx \sum_{X \in G_{i+1}}I(Y, X) 
\end{equation}
Therefore, if $w_i$ denotes the best path step including the set of variables $G_{i}$ which is relevant for $Y$, as specified in \eqref{eq:lk}, the following must hold
\begin{equation}\label{eq:dis}
    \bar{R}^2_{w_{i-1}} \leq \bar{R}^2_{{inter}} \leq \bar{R}^2_{w_{i}} 
\end{equation}
 where $\bar{R}^2_{{inter}}$ is the $\bar{R}^2$ of a model that includes as explicative variables all the variables present in the path step  $w_{i-1}$ and, at least, one variable that belongs to the path step $w_i$. \\
Now, upon noting that
\begin{equation}
\bar{R}^2_{w_{i}}=1- MSE_{(w_{i})} \times c
\end{equation}
where $c=\frac{n-1}{\sum_{j=1}^n(Y_j-\bar{y})^2}$
and \textit{MSE} denotes the mean square error computed by considering a set of $o_{r}$ variables 
\footnote{The mean squared error \textit{MSE} is defined as follows
 \begin{equation}
    MSE=\frac{1}{n-o_r-1} \sum_{j=1}^n (Y_j -\hat{f}(X_j))^2
\end{equation}}, 
the inequalities in \eqref{eq:dis} can be reformulated as follows
\begin{equation}\label{eq:mse}
     MSE_{(w_{i-1})} \geq   MSE_{({inter})} \geq   MSE_{(w_{i})} 
\end{equation}
 Eventually, as not only all the variables included in the  best path step $w_i$ may be relevant for the the variable $Y$, the performance of the model can be further improved by discarding from the list of the explicative variables those that are not significant.\\
 \end{lemma}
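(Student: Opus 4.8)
The plan is to bridge the information-theoretic quantity $\sum_{X \in w} I(Y,\mathbf{X})$ to the regression quantities $\bar{R}^2$ and $MSE$, and then to exploit the nesting of the path steps to obtain the ordering in \eqref{eq:dis} and \eqref{eq:mse}. The essential ingredient is the closed-form relationship between mutual information and the coefficient of determination recalled in Section \ref{2}: regressing $Y$ on a set of variables $\mathbf{X}$ yields, in the Gaussian case, $I(Y;\mathbf{X}) = -\tfrac{N}{2}\ln(1-R^2)$, so that $R^2 = 1 - e^{-2I/N}$ is a strictly increasing function of the associated mutual information. I would first make this correspondence explicit, treating the cumulative pairwise information $\sum_{X \in w} I(Y,\mathbf{X})$ accumulated along a path step as a proxy for the joint information $I(Y;\mathbf{X}_{w})$ captured by a model built on the variables of that step.

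Second, I would establish the monotonicity of the plain $R^2$ across path steps. Since $w_{i-1} \subset w_i$ by Definition \ref{path-s} and mutual information is non-negative, the cumulative information is non-decreasing, and by the increasing bridge above $R^2$ is non-decreasing as the path step grows. The intermediate model $inter$, containing all of $w_{i-1}$ together with a proper, non-empty subset of the variables that $w_i$ adds, is nested strictly between the two, which immediately orders the three coefficients of determination.

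Third, and this is where I expect the main obstacle, I would pass from $R^2$ to the adjusted $\bar{R}^2$ of \eqref{R2}. Unlike $R^2$, the adjusted statistic is not monotone under the addition of regressors, because it carries the degrees-of-freedom penalty through the factor $\tfrac{n-1}{n-o-1}$; adding an irrelevant variable can lower $\bar{R}^2$. The ordering in \eqref{eq:dis} therefore cannot follow from nesting alone and must invoke the relevance hypothesis \eqref{eq:lk}: the variables that $w_i$ adds to $w_{i-1}$ are exactly those whose inclusion raises the cumulative mutual information significantly. Using the identity $2I = G$ of \eqref{eq:test}, a significant information gain corresponds to a likelihood-ratio statistic exceeding the AIC/BIC thresholds of Section \ref{2}, that is, exceeding the effective penalty absorbed by $\bar{R}^2$. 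The crux is to show that, for such relevant variables, the gain in explained variation dominates the adjustment penalty so that $\bar{R}^2$ still increases from $w_{i-1}$ to $w_i$; the boundary regime \eqref{eq:lk}, in which the extra variables add no information, is precisely where $\bar{R}^2_{w_{i-1}} \approx \bar{R}^2_{w_i}$ and the inequalities tighten to equalities.

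Finally, I would convert the ordering on $\bar{R}^2$ into the ordering \eqref{eq:mse} on the mean square error. Since $\bar{R}^2 = 1 - MSE \cdot c$ with $c = \tfrac{n-1}{\sum_{j}(Y_j-\bar{y})^2} > 0$ common to the three models, $\bar{R}^2$ is a strictly decreasing affine function of $MSE$; applying this to \eqref{eq:dis} reverses every inequality and delivers \eqref{eq:mse} directly. The only point needing care is that $c$ is genuinely shared by $w_{i-1}$, $inter$ and $w_i$, which holds because it depends only on the total sum of squares of the response $Y$ and not on the chosen regressors.
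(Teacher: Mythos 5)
Your proposal is correct in substance and its backbone coincides with the paper's own argument: the relevance hypothesis \eqref{eq:lk} gives the approximate equality of cumulative mutual information, this underwrites the ordering \eqref{eq:dis} of adjusted $R^2$ across $w_{i-1}$, the intermediate model and $w_i$, and the affine identity $\bar{R}^2_{w_i}=1-MSE_{(w_i)}\times c$, with $c=\frac{n-1}{\sum_{j=1}^n(Y_j-\bar{y})^2}$ depending only on $Y$ and $n$, converts \eqref{eq:dis} into \eqref{eq:mse}; your closing remark that $c$ is common to the three models because it involves only the total sum of squares of the response is exactly the point the paper relies on. Where you genuinely depart from the paper is the middle step, and the departure is in your favour. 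The paper asserts \eqref{eq:dis} directly as a consequence of \eqref{eq:lk}, with no intermediate mechanism; you instead (i) turn the promised ``relationship between mutual information and $\bar{R}^2$'' into an actual formula, $I(Y;\mathbf{X})=-\tfrac{N}{2}\ln(1-R^2)$, i.e.\ $R^2=1-e^{-2I/N}$, consistent with the Gaussian mutual information recalled in Section \ref{2}, and (ii) explicitly flag that adjusted $R^2$ is \emph{not} monotone under the addition of regressors, so that the nesting $w_{i-1}\subset w_i$ alone cannot deliver \eqref{eq:dis}; the relevance hypothesis must be invoked to show that the information gain dominates the degrees-of-freedom penalty, which you sketch via $2I=G$ in \eqref{eq:test} and the AIC/BIC corrections. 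The paper glosses over both points, so its version is terser but \eqref{eq:dis} remains an assertion, whereas yours isolates the precise quantitative condition under which it holds and identifies the boundary regime \eqref{eq:lk} where the inequalities collapse to near-equalities. One caveat you share with the paper and should state as an explicit assumption rather than a fact: treating the sum of pairwise informations $\sum_{X\in w_i} I(Y,X)$ as a proxy for the joint information captured by a regression on the variables of $w_i$ ignores redundancy among regressors; it is a modelling approximation suggested by the tree structure, not an identity, and neither your argument nor the paper's derives it.
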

The above considerations are at the base of the \textit{best path algorithm}, which is described here below. This algorithm belongs to \textit{Sequential Forward Search} because it starts with an empty set and keeps on adding variables \cite{patil2014dimension}. 
The pseudo code of the algorithm is the following:  
 \begin{itemize}
 \item \textbf{Step 0}: Run the algorithm to find the optimal tree or forest using the \cite{edwards2010selecting} algorithm, and call this model $\mathcal{M}_0$
     \item \textbf{Step 1}: Select the variable of interest $Y$ and identify all path steps   $w_i$, starting form the node $Y$, with $i=1,...,k$
     \item \textbf{Step 2}: For  $i=1,...,k$ :
         \subitem (a) fit an econometric model with  the $X_j$ variables as independent variables, where  $X_j \in w_i$   
         \subitem (b) implement \textit{cross-validation}
     \item \textbf{Step 3}: compute  $\bar{R}^2$ for each model and pick as best model the one with the largest $\bar{R}^2$, and call it $ \mathcal{M}_w$
     \item \textbf{Step 4}: Fit the model $\mathcal{M}_w$  and select only the significant variables. Call this model $\mathcal{M}_f$
 \end{itemize}
 It is worth noting that:
 \begin{equation}
 \mathcal{M}_f \subseteq \mathcal{M}_w \subset \mathcal{M}_0. \\
 \end{equation}
 This algorithm is a special case of the \textit{mRMRe } algorithm of \cite{de2013mrmre}, originally proposed by \cite{battiti1994using} that measures the importance of variables based on mutual information criterion to evaluate a set of candidate variables and to select an informative subset \cite{kratzer2018varrank}. \\
 With reference to the algorithm here proposed, it must be noted that at \textbf{Step 0} it operates an unsupervised learning, finalized to highlight the relationships between variables inside the dataset \cite{edwards2010selecting}.\\ At \textbf{Step 1} the algorithm organizes the variables in specific subsets or path steps according to a specific variable of interest.\\  At  \textbf{Step 2} Models with explicative variables included in the path steps devised in the previous step are estimated and a
  cross validation analysis is carried out to avoid problems like overfitting, selection bias. This also guarantee the possibility of   extending model results to other independent  dataset  \cite{cawley2010over}.
 In this stage the ability of the model to predict data non used in the estimation phase, is checked by dividing randomly the set of observations into $h$ groups, or \textit{fold}, of approximately equal size. Each fold, in turn, is treated as validation set \cite{friedman2017elements}.\\
 At \textbf{Step 3}  the algorithm 
identifies the best path step in term of \textit{Adjusted R-squared} $\bar{R}^2$.\\
 Finally, if some variables of the best path are  not significant respect to the variable of interest $Y$, they are removed in the \textbf{Step 4}.\\
 The best path step $w_i$ includes the set of variables which are the most relevant for the explanation of $Y$ not only between all the other path steps but also between all possible subsets of variables included in the dataset.

\section{The Best Path Algorithm at Work}
\label{4}
As a testbed for this theoretical approach, we apply the Best Path Algorithm  by using three different examples of real-world datasets. The first dataset is \textit{Hitters} dataset, which is available on R and holds information from Major League Baseball Data from the 1986 and 1987 seasons \cite{james2013introduction}. This dataset was used to introduce the \textit{LASSO} method as a strategy for variable selections \cite{tibshirani1997lasso}. The second application regards \textit{Communities and Crime} dataset \footnote{https://archive.ics.uci.edu/ml/datasets/communities+and+crime}. It combines socio-economic data 
from the 1990 US Census, law enforcement data from the 1990 US LEMAS survey, and crime data from the 1995 FBI UCR \cite{redmond2002data}. The third dataset arises from the main \textit{World Development Indicators} made available to the World Bank\footnote{https://databank.worldbank.org/source/world-development-indicators}. In order to test the prediction performance of the \textit{best path step algorithm}, we present a prediction comparison with the\textit{LASSO} method which is a benchmark method in statistical learning\cite{tibshirani1997lasso}  .

\subsection{\textit{Hitters}}
The Hitters dataset is composed of $263$ observations and $20$ variables: three of them are dichotomous variables while the others are continuous ones. The aim is to find the best appropriate subset of variables to explain a baseball player’s salary on the basis of various statistics
associated with his performance in the previous year. In order to realize this task, we implement the \textit{best path algorithm}  for the node associated to \textit{Salary}\footnote{1987 annual salary on opening day in thousands of dollars}. 
Figure \ref{fig2} encodes the conditional independence relations among the variables of the dataset. 

\begin{figure}[H]\par\medskip
\centering
\includegraphics[scale=0.60]{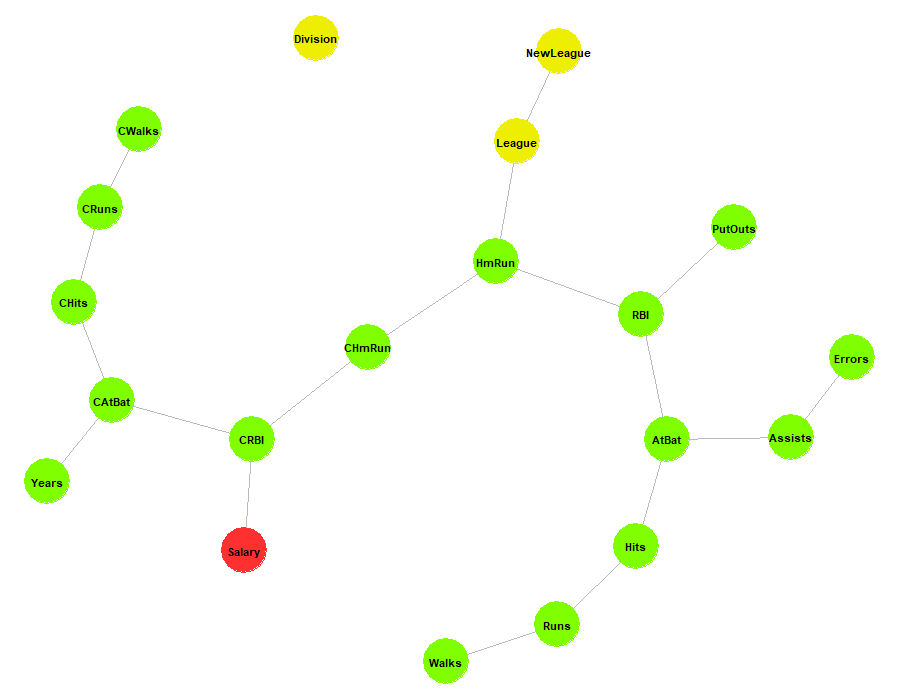}
\caption{The minimal BIC forest for the \textit{Hitters} dataset. Discrete variables are
shown as yellow nodes,
continuous variables are green while the variable of interest \textit{Salary} is the red node. }
\label{fig2}
\end{figure}
As we can see from Figure \ref{fig2}, the \textit{Hitters} optimal forest is depicted through two components, one isolated node (\textit{Division} \footnote{A factor with levels E and W indicating player's division at the end of 1986}) and another big component, which includes the other variables of the dataset. The node of interest, \textit{Salary}, is connected directly with the node \textit{CRIB} representing the number of runs batted in during a players career. Starting from the \textit{Salary} node, the \textit{distance} and the \textit{path steps} are computed, according to the definitions \ref{distance} and \ref{path-s}. The distance between \textit{Salary} and \textit{Walks} \footnote{Number of walks in 1986} turns out to be equal to $d^{(8)}$.  As the later is also the maximum distance 
meaning that $8$ possible subsets of variables (\textit{path steps}) must be considered in order to build an econometric linear model to explain  \textit{Salary}.

\begin{table}[H]\par\medskip
\begin{tabular}{@{}ccc@{}}
\textbf{Path Steps}&\textbf{MSE}&\textbf{$\bar{R}^2$} \\ \midrule
\multicolumn{1}{|c|}{\textit{path step 1}} & \multicolumn{1}{c|}{\textit{367.453}} & \multicolumn{1}{c|}{\textit{0.380}} \\ \midrule
\multicolumn{1}{|c|}{\textit{path step 2}} & \multicolumn{1}{c|}{\textit{374.153}} & \multicolumn{1}{c|}{\textit{0.361}} \\ \midrule
\multicolumn{1}{|c|}{\textit{path step 3}} & \multicolumn{1}{c|}{\textit{360.377}} & \multicolumn{1}{c|}{\textit{0.431}} \\ \midrule
\multicolumn{1}{|c|}{\textit{path step 4}} & \multicolumn{1}{c|}{\textit{357.546}} & \multicolumn{1}{c|}{\textit{0.436}} \\ \midrule
\multicolumn{1}{|c|}{\textit{path step 5}} & \multicolumn{1}{c|}{\textit{353.428}} & \multicolumn{1}{c|}{\textit{0.460}} \\ \midrule
\multicolumn{1}{|c|}{\textit{path step 6}} & \multicolumn{1}{c|}{\textit{350.317}} & \multicolumn{1}{c|}{\textit{0.448}} \\ \midrule
\multicolumn{1}{|c|}{\textit{path step 7}} & \multicolumn{1}{c|}{\textit{350.876}} & \multicolumn{1}{c|}{\textit{0.447}} \\ \midrule
\multicolumn{1}{|c|}{\textit{path step 8}} & \multicolumn{1}{c|}{\textit{344.545}} & \multicolumn{1}{c|}{\textit{0.463}} \\ \bottomrule
\end{tabular}
\caption{$\bar{R}^2$ and \textit{MSE} for each \textit{path steps}; variable of interest: \textit{Salary}}
\label{tab1}
\end{table}

Table \ref{tab1} shows  \textit{MSE} and $\bar{R}^2$, for each \textit{path step}, after implementing cross-validation over \textit{OLS} models. As we can see from results shown in this table, the  algorithm identifies \textit{path step 8} as the \textit{best path step}. The final output of the algorithm is reported in Table \ref{tab2}, where we can see the estimates of the model with explicative variables detected by the algorithm. 

\begin{table}[H]\par\medskip
\begin{tabular}{@{}|cccllc|@{}}
\toprule
\multicolumn{1}{|c|}{\textbf{Coefficients}}      & \multicolumn{1}{c|}{\textbf{Estimate}} & \multicolumn{1}{c|}{\textbf{Std. Error}} & \multicolumn{1}{c|}{\textbf{t value}} & \multicolumn{1}{c|}{\textbf{$Pr(>|t|)$}} & \textbf{Signif.} \\ \midrule
\multicolumn{1}{|c|}{\textit{$\beta_0$}}         & \multicolumn{1}{c|}{\textit{41.83}}    & \multicolumn{1}{c|}{\textit{62.99}}      & \multicolumn{1}{l|}{\textit{0.664}}   & \multicolumn{1}{l|}{\textit{0.507}}      & \textit{}        \\ \midrule
\multicolumn{1}{|c|}{\textit{$\beta_{{CRuns}}$}}   & \multicolumn{1}{c|}{\textit{1.12}}     & \multicolumn{1}{c|}{\textit{0.20}}       & \multicolumn{1}{l|}{\textit{5.535}}   & \multicolumn{1}{l|}{\textit{0.000}}      & \textit{***}     \\ \midrule
\multicolumn{1}{|c|}{\textit{$\beta_{CWalks}$}}  & \multicolumn{1}{c|}{\textit{-0.70}}    & \multicolumn{1}{c|}{\textit{0.27}}       & \multicolumn{1}{l|}{\textit{-2.596}}  & \multicolumn{1}{l|}{\textit{0.010}}      & \textit{**}      \\ \midrule
\multicolumn{1}{|c|}{\textit{$\beta_{AtBat}$}}   & \multicolumn{1}{c|}{\textit{-2.13}}    & \multicolumn{1}{c|}{\textit{0.53}}       & \multicolumn{1}{l|}{\textit{-3.982}}  & \multicolumn{1}{l|}{\textit{0.000}}      & \textit{***}     \\ \midrule
\multicolumn{1}{|c|}{\textit{$\beta_{PutOuts}$}} & \multicolumn{1}{c|}{\textit{0.30}}     & \multicolumn{1}{c|}{\textit{0.08}}       & \multicolumn{1}{l|}{\textit{4.004}}   & \multicolumn{1}{l|}{\textit{0.000}}      & \textit{***}     \\ \midrule
\multicolumn{1}{|c|}{\textit{$\beta_{Hits}$}}    & \multicolumn{1}{c|}{\textit{7.31}}     & \multicolumn{1}{c|}{\textit{1.69}}       & \multicolumn{1}{l|}{\textit{4.335}}   & \multicolumn{1}{l|}{\textit{0.000}}      & \textit{***}     \\ \midrule
\multicolumn{1}{|c|}{\textit{$\beta_{Walks}$}}   & \multicolumn{1}{c|}{\textit{6.17}}     & \multicolumn{1}{c|}{\textit{1.56}}       & \multicolumn{1}{l|}{\textit{3.961}}   & \multicolumn{1}{l|}{\textit{0.000}}      & \textit{***}     \\ \midrule
\multicolumn{6}{|l|}{\textit{\textbf{$\bar{R}^2=0.486$; Signif.   Codes: 0 ‘***’ 0.001 ‘**’ 0.01 ‘*’ 0.05 ‘.’ 0.1}}}                                                                                                                       \\ \bottomrule
\end{tabular}
\caption{Final output; \textit{Salary} model, \textit{OLS} regressions summary }
\label{tab2}
\end{table}

\subsection{\textit{Communities and Crime}}
 The second application employs \textit{Communities and Crime} dataset, that combine data from different sources \cite{redmond2002data}.
 This dataset contains more than $100$ variables, for this reason, the manual variable selection can go beyond the cognitive abilities of researchers and  create fertile ground for scientific malpractices (like reverse p-hacking, \cite{head2015extent, chuard2019evidence}) and for generating ex-post justified opaque models influenced by cognitive biases.
In detail, the original dataset is composed of $128$ variable and $1994$ observations. After removing variables with a lot of missing data, the resulting dataset tursn out to be composed of $103$ variables and $1993$ observation. \cite{redmond2002data} proposed a data driven approach to help police departments when developing a strategic viewpoint for avoiding the crimes. The target variable of this strategy is \textit{Violent crime per population}, that represents the total number of violent crimes per $100K$ population \cite{us20032000}. The aim of this application is to select the appropriate explicative variables in order to build an econometric model for the variable \textit{Violent crime per population}. Figure \ref{fig3} shows the optimal forest built on \textit{Communities and Crime} dataset, the variable label correspondence is shown in Table \ref{NameVar}. The forest in Figure \ref{fig3} is composed of two isolated nodes and one big component. The two isolated nodes are node 1, the yellow one, corresponding to the variable \textit{Community Name}, and the red one, the node 2,  associated with the continuous variable \textit{fold}. This variable  denotes the number for non-random 10 fold cross-validation, potentially useful for debugging, paired tests. The node of interest \textit{Violent crime per population} (103, the red one), belongs to the big component, meaning that only the variables present in this component must be considered.

\begin{figure}[H]\par\medskip
\centering
\includegraphics[scale=0.60]{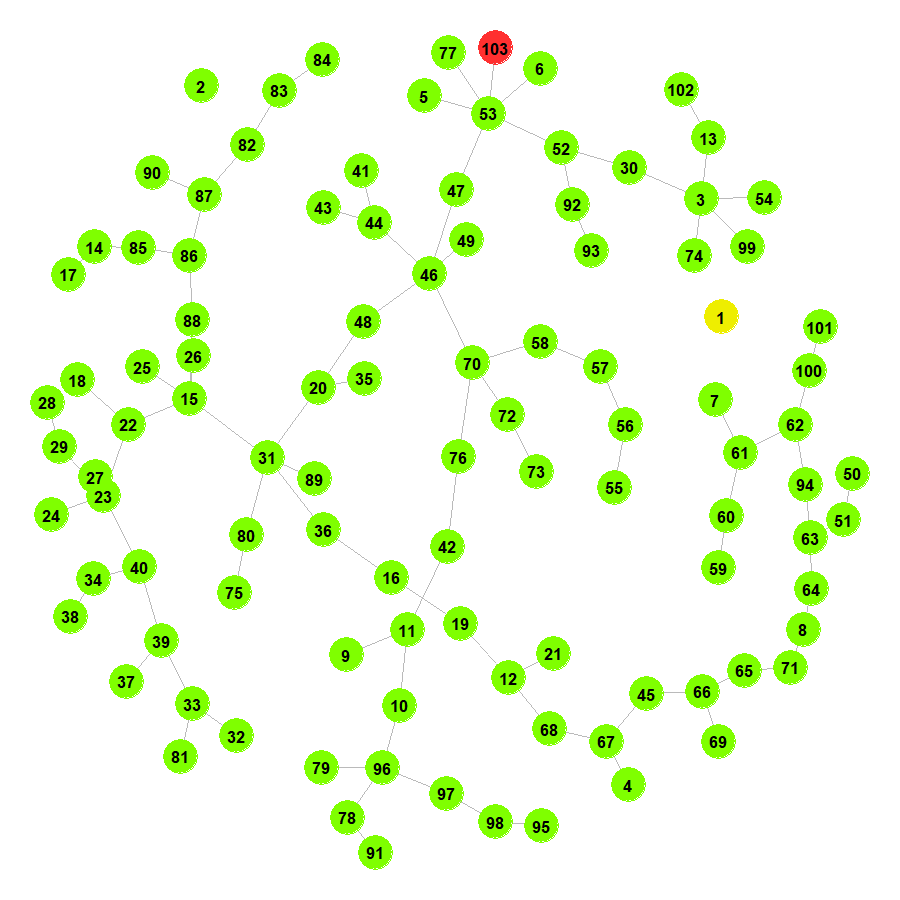}
\caption{The minimal BIC forest for the \textit{Communities and Crime} dataset. Discrete variable is
shown as yellow nodes,
continuous variables are green while the variable of interest \textit{Violent crime per Population} is represented by the red node. }
\label{fig3}
\end{figure}
According to definition \ref{path-s}, we contemplate $22$ potential  subsets of variables (path steps) that can explain the variable \textit{Violent crime per population}. Table \ref{tab3} shows  \textit{MSE} and $\bar{R}^2$, for each model including as explicative variables the ones corresponding to each \textit{path step}, after   cross-validation has been carried foe each of them. As we can see from Table \ref{tab3}, the  algorithm identifies \textit{path step 20} as the \textit{best path step}. The final output of the algorithm is reported in Table \ref{tab4}. As we can see the algorithm suggests the use of 23 variables to explain \textit{Violent crime per population}, where some of them have a negative effect such as \textit{PctWorkMom} which is the percentage of moms of kids under 18 age, while others have a positive effect such as the \textit{NumStreet} which is the number of homeless people counted in the street.

\begin{table}[H]\par\medskip
\begin{tabular}{@{}|c|l|l|@{}}
\textbf{Path Steps}   & \multicolumn{1}{c|}{\textbf{MSE}} & \multicolumn{1}{c|}{\textbf{$\bar{R}^2$}} \\ \midrule
\textit{path step 1}  & \textit{0.1569}                   & \textit{0.5474}                           \\ \midrule
\textit{path step 2}  & \textit{0.1438}                   & \textit{0.6204}                           \\ \midrule
\textit{path step 3}  & \textit{0.1432}                   & \textit{0.6234}                           \\ \midrule
\textit{path step 4}  & \textit{0.1413}                   & \textit{0.6336}                           \\ \midrule
\textit{path step 5}  & \textit{0.1391}                   & \textit{0.6447}                           \\ \midrule
\textit{path step 6}  & \textit{0.1392}                   & \textit{0.6441}                           \\ \midrule
\textit{path step 7}  & \textit{0.1394}                   & \textit{0.6429}                           \\ \midrule
\textit{path step 8}  & \textit{0.1394}                   & \textit{0.6429}                           \\ \midrule
\textit{path step 9}  & \textit{0.1389}                   & \textit{0.6453}                           \\ \midrule
\textit{path step 10} & \textit{0.1379}                   & \textit{0.6504}                           \\ \midrule
\textit{path step 11} & \textit{0.1367}                   & \textit{0.6564}                           \\ \midrule
\textit{path step 12} & \textit{0.1367}                   & \textit{0.6559}                           \\ \midrule
\textit{path step 13} & \textit{0.1370}                   & \textit{0.6544}                           \\ \midrule
\textit{path step 14} & \textit{0.1370}                   & \textit{0.6542}                           \\ \midrule
\textit{path step 15} & \textit{0.1368}                   & \textit{0.6554}                           \\ \midrule
\textit{path step 16} & \textit{0.1364}                   & \textit{0.6572}                           \\ \midrule
\textit{path step 17} & \textit{0.1366}                   & \textit{0.6567}                           \\ \midrule
\textit{path step 18} & \textit{0.1365}                   & \textit{0.6569}                           \\ \midrule
\textit{path step 19} & \textit{0.1365}                   & \textit{0.6571}                           \\ \midrule
\textit{path step 20} & \textit{0.1359}                   & \textit{0.6601}                           \\ \midrule
\textit{path step 21} & \textit{0.1359}                   & \textit{0.6599}                           \\ \midrule
\textit{path step 22} & \textit{0.1360}                   & \textit{0.6597}                           \\ \midrule
\textit{path step 23} & \textit{0.1359}                   & \textit{0.6599}                           \\ \midrule
\textit{path step 24} & \textit{0.1360}                   & \textit{0.6593}                           \\ \bottomrule
\end{tabular}
\caption{$\bar{R}^2$ and \textit{MSE} for each \textit{path steps}; variable of interest: \textit{Violent crime per population}}
\label{tab3}
\end{table}

\begin{table}[H]\par\medskip
\begin{tabular}{@{}|cccccc|@{}}
\toprule
\multicolumn{1}{|c|}{\textbf{Coefficients}}                    & \multicolumn{1}{c|}{\textbf{Estimate}} & \multicolumn{1}{c|}{\textbf{Std. Error}} & \multicolumn{1}{c|}{\textbf{t value}} & \multicolumn{1}{c|}{\textbf{$Pr(>|t|)$}} & \textbf{Signif} \\ \midrule
\multicolumn{1}{|c|}{\textit{$\beta_0$}}                       & \multicolumn{1}{c|}{\textit{0.459}}    & \multicolumn{1}{c|}{\textit{0.075}}      & \multicolumn{1}{c|}{\textit{6.083}}   & \multicolumn{1}{c|}{\textit{0.000}}      & \textit{***}    \\ \midrule
\multicolumn{1}{|c|}{\textit{$\beta_{PctIlleg}$}}              & \multicolumn{1}{c|}{\textit{0.173}}    & \multicolumn{1}{c|}{\textit{0.038}}      & \multicolumn{1}{c|}{\textit{4.499}}   & \multicolumn{1}{c|}{\textit{0.000}}      & \textit{***}    \\ \midrule
\multicolumn{1}{|c|}{\textit{$\beta_{racepctblack}$}}          & \multicolumn{1}{c|}{\textit{0.196}}    & \multicolumn{1}{c|}{\textit{0.026}}      & \multicolumn{1}{c|}{\textit{7.403}}   & \multicolumn{1}{c|}{\textit{0.000}}      & \textit{***}    \\ \midrule
\multicolumn{1}{|c|}{\textit{$\beta_{PctKids2Par}$}}           & \multicolumn{1}{c|}{\textit{-0.335}}   & \multicolumn{1}{c|}{\textit{0.063}}      & \multicolumn{1}{c|}{\textit{-5.354}}  & \multicolumn{1}{c|}{\textit{0.000}}      & \textit{***}    \\ \midrule
\multicolumn{1}{|c|}{\textit{$\beta_{PctVacantBoarded}$}}      & \multicolumn{1}{c|}{\textit{0.039}}    & \multicolumn{1}{c|}{\textit{0.018}}      & \multicolumn{1}{c|}{\textit{2.095}}   & \multicolumn{1}{c|}{\textit{0.036}}      & \textit{*}      \\ \midrule
\multicolumn{1}{|c|}{\textit{$\beta_{NumStreet}$}}             & \multicolumn{1}{c|}{\textit{0.210}}    & \multicolumn{1}{c|}{\textit{0.043}}      & \multicolumn{1}{c|}{\textit{4.882}}   & \multicolumn{1}{c|}{\textit{0.000}}      & \textit{***}    \\ \midrule
\multicolumn{1}{|c|}{\textit{$\beta_{MalePctDivorce}$}}        & \multicolumn{1}{c|}{\textit{0.113}}    & \multicolumn{1}{c|}{\textit{0.032}}      & \multicolumn{1}{c|}{\textit{3.498}}   & \multicolumn{1}{c|}{\textit{0.000}}      & \textit{***}    \\ \midrule
\multicolumn{1}{|c|}{\textit{$\beta_{NumImmig}$}}              & \multicolumn{1}{c|}{\textit{-0.181}}   & \multicolumn{1}{c|}{\textit{0.058}}      & \multicolumn{1}{c|}{\textit{-3.118}}  & \multicolumn{1}{c|}{\textit{0.002}}      & \textit{**}     \\ \midrule
\multicolumn{1}{|c|}{\textit{$\beta_{HousVacant}$}}            & \multicolumn{1}{c|}{\textit{0.178}}    & \multicolumn{1}{c|}{\textit{0.030}}      & \multicolumn{1}{c|}{\textit{6.043}}   & \multicolumn{1}{c|}{\textit{0.000}}      & \textit{***}    \\ \midrule
\multicolumn{1}{|c|}{\textit{$\beta_{PctPopUnderPov}$}}        & \multicolumn{1}{c|}{\textit{-0.124}}   & \multicolumn{1}{c|}{\textit{0.038}}      & \multicolumn{1}{c|}{\textit{-3.235}}  & \multicolumn{1}{c|}{\textit{0.001}}      & \textit{**}     \\ \midrule
\multicolumn{1}{|c|}{\textit{$\beta_{PctEmploy}$}}             & \multicolumn{1}{c|}{\textit{0.103}}    & \multicolumn{1}{c|}{\textit{0.053}}      & \multicolumn{1}{c|}{\textit{1.960}}   & \multicolumn{1}{c|}{\textit{0.050}}      & \textit{.}      \\ \midrule
\multicolumn{1}{|c|}{\textit{$\beta_{MedRent}$}}               & \multicolumn{1}{c|}{\textit{0.285}}    & \multicolumn{1}{c|}{\textit{0.054}}      & \multicolumn{1}{c|}{\textit{5.279}}   & \multicolumn{1}{c|}{\textit{0.000}}      & \textit{***}    \\ \midrule
\multicolumn{1}{|c|}{\textit{$\beta_{pctWWage}$}}              & \multicolumn{1}{c|}{\textit{-0.213}}   & \multicolumn{1}{c|}{\textit{0.044}}      & \multicolumn{1}{c|}{\textit{-4.790}}  & \multicolumn{1}{c|}{\textit{0.000}}      & \textit{***}    \\ \midrule
\multicolumn{1}{|c|}{\textit{$\beta_{whitePerCap}$}}           & \multicolumn{1}{c|}{\textit{-0.103}}   & \multicolumn{1}{c|}{\textit{0.032}}      & \multicolumn{1}{c|}{\textit{-3.226}}  & \multicolumn{1}{c|}{\textit{0.001}}      & \textit{**}     \\ \midrule
\multicolumn{1}{|c|}{\textit{$\beta_{RentLowQ}$}}              & \multicolumn{1}{c|}{\textit{-0.251}}   & \multicolumn{1}{c|}{\textit{0.051}}      & \multicolumn{1}{c|}{\textit{-4.919}}  & \multicolumn{1}{c|}{\textit{0.000}}      & \textit{***}    \\ \midrule
\multicolumn{1}{|c|}{\textit{$\beta_{OtherPerCap}$}}           & \multicolumn{1}{c|}{\textit{0.000}}    & \multicolumn{1}{c|}{\textit{0.000}}      & \multicolumn{1}{c|}{\textit{2.495}}   & \multicolumn{1}{c|}{\textit{0.013}}      & \textit{*}      \\ \midrule
\multicolumn{1}{|c|}{\textit{$\beta_{pctUrban}$}}              & \multicolumn{1}{c|}{\textit{0.038}}    & \multicolumn{1}{c|}{\textit{0.009}}      & \multicolumn{1}{c|}{\textit{4.387}}   & \multicolumn{1}{c|}{\textit{0.000}}      & \textit{***}    \\ \midrule
\multicolumn{1}{|c|}{\textit{$\beta_{pctWRetire}$}}            & \multicolumn{1}{c|}{\textit{-0.100}}   & \multicolumn{1}{c|}{\textit{0.030}}      & \multicolumn{1}{c|}{\textit{-3.351}}  & \multicolumn{1}{c|}{\textit{0.001}}      & \textit{***}    \\ \midrule
\multicolumn{1}{|c|}{\textit{$\beta_{MedOwnCostPctIncNoMtg}$}} & \multicolumn{1}{c|}{\textit{-0.084}}   & \multicolumn{1}{c|}{\textit{0.019}}      & \multicolumn{1}{c|}{\textit{-4.478}}  & \multicolumn{1}{c|}{\textit{0.000}}      & \textit{***}    \\ \midrule
\multicolumn{1}{|c|}{\textit{$\beta_{pctWFarmSelf}$}}          & \multicolumn{1}{c|}{\textit{0.034}}    & \multicolumn{1}{c|}{\textit{0.019}}      & \multicolumn{1}{c|}{\textit{1.810}}   & \multicolumn{1}{c|}{\textit{0.070}}      & \textit{.}      \\ \midrule
\multicolumn{1}{|c|}{\textit{$\beta_{PctPersDenseHous}$}}      & \multicolumn{1}{c|}{\textit{0.227}}    & \multicolumn{1}{c|}{\textit{0.042}}      & \multicolumn{1}{c|}{\textit{5.389}}   & \multicolumn{1}{c|}{\textit{0.000}}      & \textit{***}    \\ \midrule
\multicolumn{1}{|c|}{\textit{$\beta_{PctNotSpeakEnglWell}$}}   & \multicolumn{1}{c|}{\textit{-0.116}}   & \multicolumn{1}{c|}{\textit{0.047}}      & \multicolumn{1}{c|}{\textit{-2.444}}  & \multicolumn{1}{c|}{\textit{0.015}}      & \textit{*}      \\ \midrule
\multicolumn{1}{|c|}{\textit{$\beta_{PctWorkMom}$}}            & \multicolumn{1}{c|}{\textit{-0.119}}   & \multicolumn{1}{c|}{\textit{0.024}}      & \multicolumn{1}{c|}{\textit{-5.028}}  & \multicolumn{1}{c|}{\textit{0.000}}      & \textit{***}    \\ \midrule
\multicolumn{1}{|c|}{\textit{$\beta_{PctForeignBorn}$}}        & \multicolumn{1}{c|}{\textit{0.123}}    & \multicolumn{1}{c|}{\textit{0.038}}      & \multicolumn{1}{c|}{\textit{3.285}}   & \multicolumn{1}{c|}{\textit{0.001}}      & \textit{**}     \\ \midrule
\multicolumn{6}{|c|}{\textit{$\bar{R}^2=673$;   Signif.   Codes: 0 ‘***’ 0.001 ‘**’   0.01 ‘*’ 0.05 ‘.’ 0.1}}                                                                                                                                           \\ \bottomrule
\end{tabular}
\caption{Final output; \textit{Violent crime per population} model, \textit{OLS} regressions summary }
\label{tab4}
\end{table}

\subsection{\textit{World Development Indicators}}
The third application is based on World Development Indicators available on the world bank database \cite{world2014world}. The dataset has been obtained  downloading all development indicators available for $2020$, and it is composed of $266$ observations, Nations and Community Organizations(ex. European Union, OECD members, and so on...), and $1444$ indicators. After removing  indicators and observations with a a lot of missing, the final dataset turns out to be composed of $166$ observations, only Nations, and $133$ indicators. Figure \ref{fig4} shows the Nations within the dataset, while the list of the indicators are reported in Table \ref{NameVar1}.

\begin{figure}[H]\par\medskip
\centering
\includegraphics[scale=0.45]{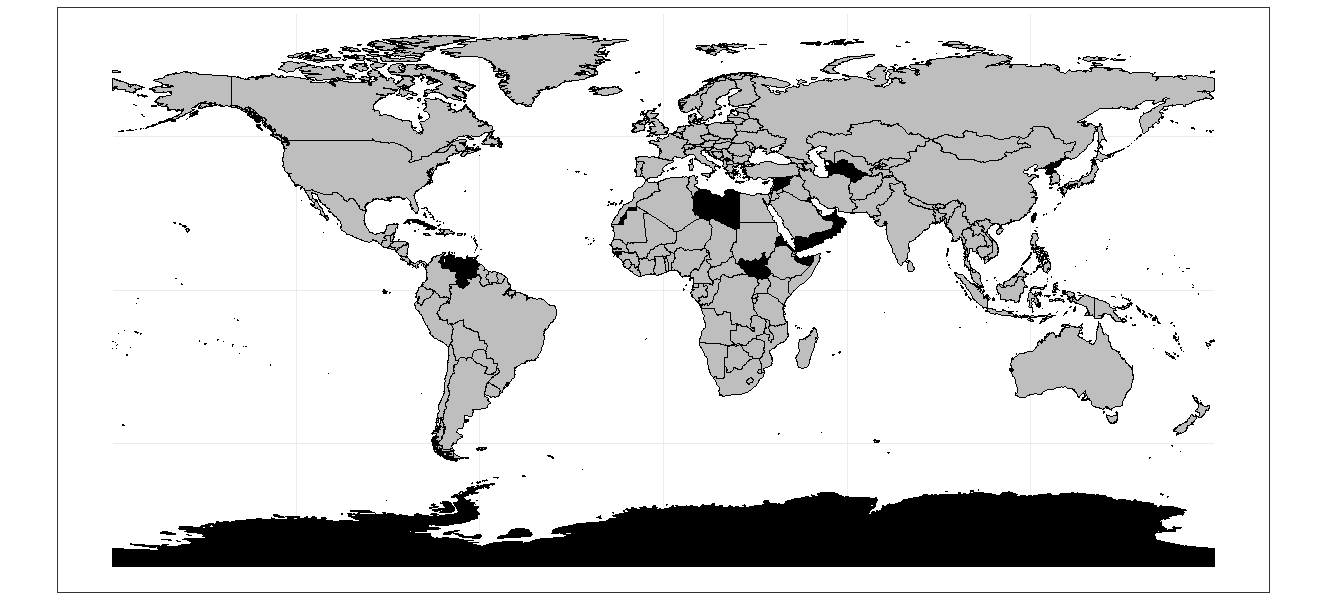}
\caption{Map of the nations within the \textit{World Development Indicators} dataset, in grey are indicated Nations included in the dataset, in black the absent ones.}
\label{fig4}
\end{figure}
 
The aim of this application is to select the appropriate explicative variables to build an econometric model for \textit{Price level ratio}  \cite{clague1986determinants, chow1987money} which is the ratio of a purchasing power parity (PPP) conversion factor to an exchange rate. It provides a measure of the differences in price levels between countries by indicating the number of units of the common currency needed to buy the same volume of a given good in other countries. The impressive body of data on PPP that has recently become available has stimulated renewed interest in the explanation of the discrepancies and has afforded the opportunity to test some hypotheses on this variable \cite{clague1986determinants}. For instance, the idea that in long-run equilibrium PPPs tend to equal exchange rates, as suggested by \cite{clague1986determinants}, has been rather conclusively refuted by the data. Indeed, data reveal a very systematic tendency for poor countries to have lower PPP  than exchange rate. Different theoretical works on national price levels \cite{kravis1983toward,clague1985model} have suggested other variables to explain PPP, such as country characteristic and both long-run and short-run variables. Structural variables include the foreign trade ratio, the service share in GDP, natural resource abundance, tourist receipts, and educational attainment, while short-run or monetary variables include the growth in the money supply and the balance of trade. The graph in Figure \ref{fig5} encodes the dependence among the variables present in the \textit{World Development Indicators} dataset through a tree. The node of interest PPP, is directly connected with the node $29$, that corresponds to \textit{GNI per capita, Atlas method} (formerly GNP per capita). This variable is the gross national income, converted to U.S. dollars using the World Bank Atlas method. GNI is the sum of value added by all resident producers plus any product taxes (less subsidies), not included in the valuation of output, plus net receipts of primary income (compensation of employees and property income) from abroad. GNI, calculated in national currency, is usually converted to U.S. dollars at official exchange rates for comparisons across economies, although an alternative rate is used when the official exchange rate is judged to diverge by an exceptionally large margin from the rate actually applied in international transactions. To smooth fluctuations in prices and exchange rates, a special \textit{Atlas} method  of conversion is used by the World Bank. This applies a conversion factor that averages the exchange rate for a given year and the two preceding years, adjusted for differences in rates of inflation between the country and  a set of countries which, up to 2000, were the G-5 countries (France, Germany, Japan, the United Kingdom, and the United States) up to 2000. Since 2001, the set of countries include Euro area, Japan, United Kingdom, and the United States. \footnote{https://databank.worldbank.org/source/world-development-indicators}

\begin{figure}[H]\par\medskip
\centering
\includegraphics[scale=0.60]{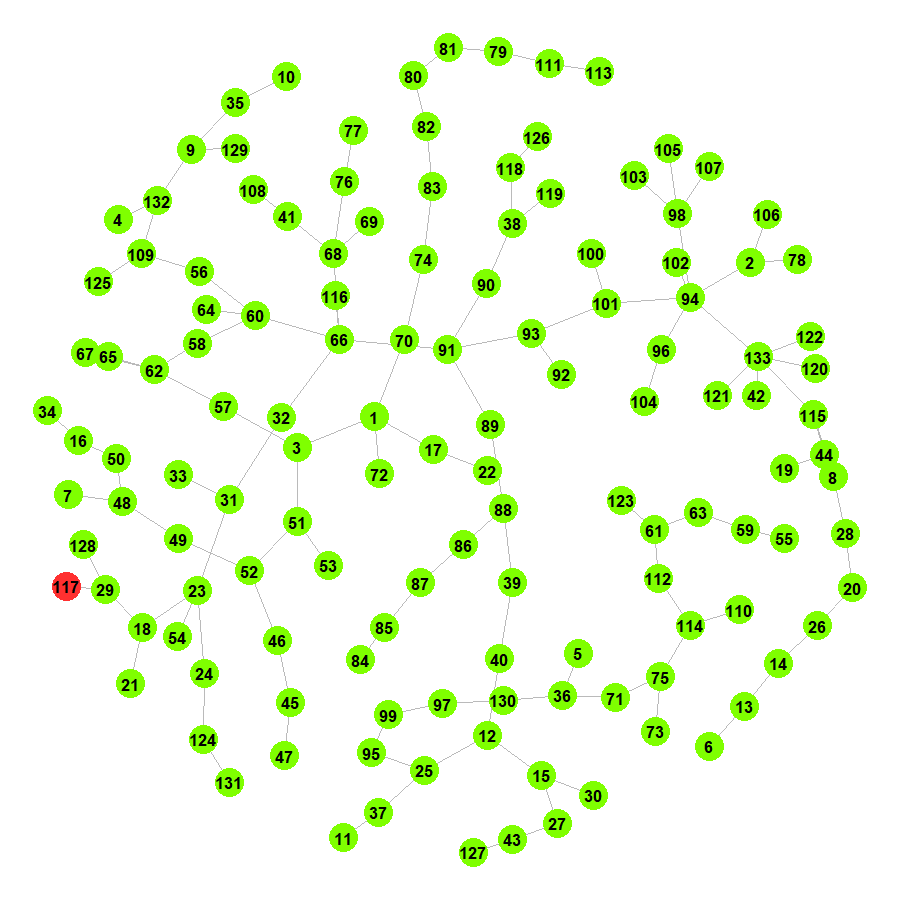}
\caption{The minimal BIC tree for the \textit{World Development Indicators} dataset. Continuous variables are in green while the variable of interest \textit{Price level ratio} is  the red node. }
\label{fig5}
\end{figure}
Table \ref{tab5} shows \textit{MSE} and $\bar{R}^2$, for each model including as explixcative variables the ones present in the path steps, after implementing cross-validation on the estimated models by \textit{OLS}. The algorithm identifies \textit{path step 4} as the the \textit{best path step} for the variable of interest \textit{Price level ratio}. The final output of the algorithm is reported in Table \ref{tab6}. As we see from this table, the algorithm suggests the use of $6$ variables to explain the variable \textit{Price level ratio}. The variables \textit{GNI per capita, Atlas method}, \textit{GDP per capita curent }\footnote{GDP per capita is gross domestic product divided by midyear population. GDP is the sum of gross value added by all resident producers in the economy plus any product taxes and minus any subsidies not included in the value of the products. It is calculated without making deductions for depreciation of fabricated assets or for depletion and degradation of natural resources. Data are in current U.S. dollars.} and \textit{Human capital index}\footnote{The HCI calculates the contributions of health and education to worker productivity. The final index score ranges from zero to one and measures the productivity as a future worker of child born today relative to the benchmark of full health and complete education.} increase the \textit{Price level ratio}. \\While, the variables \textit{GDP per capita 2015}
 \textit{GDP per capita, PPP, 2017}
  \footnote{GDP per capita based on purchasing power parity (PPP) is the gross domestic product converted to international dollars using purchasing power parity rates. An international dollar has the same purchasing power over GDP as the U.S. dollar has in the United States. GDP at purchaser's prices is the sum of gross value added by all resident producers in the country plus any product taxes and minus any subsidies not included in the value of the products. It is calculated without making deductions for depreciation of fabricated assets or for depletion and degradation of natural resources. Data are in constant 2017 international dollars.}
  and \textit{Personal remittances, received}\footnote{Personal remittances comprise personal transfers and compensation of employees. Personal transfers consist of all current transfers in cash or in kind made or received by resident households to or from nonresident households. Personal transfers thus include all current transfers between resident and nonresident individuals. Compensation of employees refers to the income of border, seasonal, and other short-term workers who are employed in an economy where they are not resident and of residents employed by nonresident entities. Data are the sum of two items defined in the sixth edition of the IMF's Balance of Payments Manual: personal transfers and compensation of employees.} have a negative effect for the variable of interest \textit{Price level ratio}.

\begin{table}[H]\par\medskip
\begin{tabular}{@{}|l|l|l|@{}}

\toprule
\multicolumn{1}{|c|}{\textbf{Path Steps}} & \multicolumn{1}{c|}{\textbf{MSE}} & \multicolumn{1}{c|}{\textbf{$\bar{R}^2$}} \\ \midrule
\textit{path step 1}                      & \textit{0.114}                    & \textit{0.716}                            \\ \midrule
\textit{path step 2}                      & \textit{0.115}                    & \textit{0.711}                            \\ \midrule
\textit{path step 3}                      & \textit{0.106}                    & \textit{0.771}                            \\ \midrule
\textit{path step 4}                      & \textit{0.103}                    & \textit{0.774}                            \\ \midrule
\textit{path step 5}                      & \textit{0.108}                    & \textit{0.744}                            \\ \midrule
\textit{path step 6}                      & \textit{0.108}                    & \textit{0.742}                            \\ \midrule
\textit{path step 7}                      & \textit{0.108}                    & \textit{0.735}                            \\ \midrule
\textit{path step 8}                      & \textit{0.119}                    & \textit{0.703}                            \\ \midrule
\textit{path step 9}                      & \textit{0.12}                     & \textit{0.687}                            \\ \midrule
\textit{path step 10}                     & \textit{0.109}                    & \textit{0.73}                             \\ \midrule
\textit{path step 11}                     & \textit{0.124}                    & \textit{0.665}                            \\ \midrule
\textit{path step 12}                     & \textit{0.131}                    & \textit{0.658}                            \\ \midrule
\textit{path step 13}                     & \textit{0.192}                    & \textit{0.607}                            \\ \midrule
\textit{path step 14}                     & \textit{0.201}                    & \textit{0.539}                            \\ \midrule
\textit{path step 15}                     & \textit{0.254}                    & \textit{0.536}                            \\ \midrule
\textit{path step 16}                     & \textit{0.288}                    & \textit{0.503}                            \\ \midrule
\textit{path step 17}                     & \textit{0.236}                    & \textit{0.557}                            \\ \midrule
\textit{path step 18}                     & \textit{0.281}                    & \textit{0.493}                            \\ \midrule
\textit{path step 19}                     & \textit{0.416}                    & \textit{0.447}                            \\ \midrule
\textit{path step 20}                     & \textit{0.416}                    & \textit{0.412}                            \\ \midrule
\textit{path step 21}                     & \textit{0.552}                    & \textit{0.39}                             \\ \midrule
\textit{path step 22}                     & \textit{0.756}                    & \textit{0.355}                            \\ \midrule
\textit{path step 23}                     & \textit{0.756}                    & \textit{0.355}                            \\ \midrule
\textit{path step 24}                     & \textit{0.756}                    & \textit{0.355}                            \\ \midrule
\textit{path step 25}                     & \textit{0.756}                    & \textit{0.355}                            \\ \midrule
\textit{path step 26}                     & \textit{0.784}                    & \textit{0.338}                            \\ \bottomrule
\end{tabular}
\caption{$\bar{R}^2$ and \textit{MSE} for each \textit{path steps}; variable of interest: \textit{Price level ratio}}
\label{tab5}
\end{table}

\begin{table}[H]\par\medskip
\begin{tabular}{@{}|ccccccc|@{}}
\toprule
\multicolumn{1}{|c|}{\textbf{Coefficients}}                                & \multicolumn{1}{c|}{\textbf{Label node}} & \multicolumn{1}{c|}{\textbf{Estimate}}  & \multicolumn{1}{c|}{\textbf{Std. Error}} & \multicolumn{1}{c|}{\textbf{t value}} & \multicolumn{1}{c|}{\textbf{$Pr(>|t|)$}} & \textbf{Signif} \\ \midrule
\multicolumn{1}{|c|}{\textit{$\beta_0$}}                                   & \multicolumn{1}{c|}{\textit{-}}          & \multicolumn{1}{c|}{\textit{3.11E-01}}  & \multicolumn{1}{c|}{\textit{4.52E-02}}   & \multicolumn{1}{c|}{\textit{6.89}}    & \multicolumn{1}{c|}{\textit{1.30E-10}}   & \textit{***}    \\ \midrule
\multicolumn{1}{|c|}{\textit{$\beta_{GNI-Per-Capita}$}}                   & \multicolumn{1}{c|}{\textit{29}}         & \multicolumn{1}{c|}{\textit{1.22E-05}}  & \multicolumn{1}{c|}{\textit{2.72E-06}}   & \multicolumn{1}{c|}{\textit{4.5}}     & \multicolumn{1}{c|}{\textit{1.30E-05}}   & \textit{***}    \\ \midrule
\multicolumn{1}{|c|}{\textit{$\beta_{GDP-Per-Capita-2015}$}}               & \multicolumn{1}{c|}{\textit{18}}         & \multicolumn{1}{c|}{\textit{-8.20E-06}} & \multicolumn{1}{c|}{\textit{4.72E-06}}   & \multicolumn{1}{c|}{\textit{-1.74}}   & \multicolumn{1}{c|}{\textit{0.084}}      & \textit{.}      \\ \midrule
\multicolumn{1}{|c|}{\textit{$\beta_{GDP-Per-Capita-current}$}}            & \multicolumn{1}{c|}{\textit{21}}         & \multicolumn{1}{c|}{\textit{1.37E-05}}  & \multicolumn{1}{c|}{\textit{3.79E-06}}   & \multicolumn{1}{c|}{\textit{3.62}}    & \multicolumn{1}{c|}{\textit{0.0004}}     & \textit{***}    \\ \midrule
\multicolumn{1}{|c|}{\textit{$\beta_{GDP-PPP-2017}$}}                      & \multicolumn{1}{c|}{\textit{23}}         & \multicolumn{1}{c|}{\textit{-1.03E-05}} & \multicolumn{1}{c|}{\textit{1.43E-06}}   & \multicolumn{1}{c|}{\textit{-7.19}}   & \multicolumn{1}{c|}{\textit{2.40E-11}}   & \textit{***}    \\ \midrule
\multicolumn{1}{|c|}{\textit{$\beta_{Human-capital-index}$}}                & \multicolumn{1}{c|}{\textit{31}}         & \multicolumn{1}{c|}{\textit{2.87E-01}}  & \multicolumn{1}{c|}{\textit{1.00E-01}}   & \multicolumn{1}{c|}{\textit{2.86}}    & \multicolumn{1}{c|}{\textit{0.0048}}     & \textit{**}     \\ \midrule
\multicolumn{1}{|c|}{\textit{$\beta_{Personal-remittances-received}$}} & \multicolumn{1}{c|}{\textit{54}}         & \multicolumn{1}{c|}{\textit{-2.58E-03}} & \multicolumn{1}{c|}{\textit{1.23E-03}}   & \multicolumn{1}{c|}{\textit{-2.1}}    & \multicolumn{1}{c|}{\textit{0.0374}}     & \textit{*}      \\ \midrule
\multicolumn{7}{|c|}{\textit{$\bar{R}^2= 0.792$;   Signif.   Codes: 0 ‘***’ 0.001 ‘**’   0.01 ‘*’ 0.05 ‘.’ 0.1}}                                           
\end{tabular}
\caption{Final output;\textit{Price level ratio} model, \textit{OLS} regressions summary }
\label{tab6}
\end{table}

\subsection{Prediction Test}

In statistical learning different methods have been implemented to select automatically variables. The basic strategies used for selecting variables can be divided into two main veins:  \textit{subset selection methods} and \textit{shrinkage methods} \cite{patil2014dimension}.  \textit{Subset selection} algorithms select a subset of predictors for a specific target variable. Different measures and different algorithms implement this strategy. Among them, there are \textit{subset selection} the \textit{Best Subset Selection } \cite{bertsimas2016best}, \textit{Branch and Bound}\cite{narendra1977branch} and \textit{Sequential Methods} \cite{haq2019heart}. Though  \textit{ subset selection} algorithms are  simple and conceptually appealing,  they suffer from computational limitations. In fact, the number of possible
models that must be considered $(2^p )$ grow rapidly as \textit{p} (variables) increases \cite{james2013introduction}. For this reason, algorithms that belong to \textit{shrinkage methods} represent a functional alternative. Indeed these algorithms contain all $p$ predictors and use a technique that constrains or regularizes the coefficient estimates, or equivalently, that shrinks the coefficient estimates towards zero. Among the algorithms that belong to the \textit{shrinkage method}, the \textit{LASSO} method \cite{tibshirani1997lasso} is probably  the most known \cite[among others]{hans2009bayesian,meier2008group,tibshirani2011regression}. In general, the least absolute shrinkage and selection operator (\textit{LASSO}) was put forwarded by \cite{tibshirani1997lasso}, for parameter estimation and also variable (model) selection simultaneously in regression analysis. The \textit{LASSO} is a particular case of the penalized least squares regression with $L1$-penalty function \cite{muthukrishnan2016lasso}. 
The \textit{LASSO} estimate can be defined by:
\begin{equation}
\centering
    \hat{\beta}^{LASSO}= \arg \min _{\beta} \left\{ 
    \frac{1}{2} \sum_{i=1}^n \left(y_i-\beta_0-\sum_{j=1}^p x_{ij} \beta_j
    \right)^2+ \lambda\sum_{j=1}^p |\beta_j|
    \right\}
\end{equation}
where $\lambda \leq 0$, is a turning parameter, to be determined separately\footnote{A method to determine the value of \begin{math}
\lambda
\end{math}, is to choose a gird of \begin{math}
\lambda
\end{math} values, and compute the cross-validation error for each value of \begin{math}
\lambda
\end{math}, and chose the \begin{math}
\lambda
\end{math} associate with the smaller cross validation error.}. \textit{LASSO} transforms each and every coefficient by a
constant turning parameter $\lambda$, truncating at zero. Hence it is a
forward-looking variable selection method for regression. It
decreases the residual sum of squares subject to the sum of the
absolute value of the coefficients being less than a constant \cite{muthukrishnan2016lasso}.
In order to test the accuracy of the \textit{ best path step algorithm}, we do a comparison in terms of prediction (low \textit{MSE}) with the \textit{LASSO} method, for each model presented in this work. In detail,  after implementing the \textit{LASSO} method and the \textit{ best path step algorithm}, for each variable of interest, we split the datasets in $70\%$ train and $30\%$ test in order to test the accuracy performance of these methods.
We repeat this procedure 100 times with the purpose of avoiding problems like overfitting or selection bias. Figure \ref{Prediction} shows the result of prediction, in \textit{Salary} model \ref{main:a} the \textit{best path algorithm} has a lowest \textit{MSE} respect to 
\textit{LASSO} method $68$ times. The model \textit{Crime} is reported in figure \ref{main:b}, in this case the \textit{MSE} of the \textit{best path algorithm} is $100$ times lowest than \textit{LASSO} method. Eventually, figure \ref{main:c} shows that \textit{MSE} is $81$ times lowest with the \textit{best path algorithm} respect to the \textit{LASSO} method in the \textit{Price level ratio} model.

\begin{figure}[H]\par\medskip
\centering
\begin{minipage}{.5\linewidth}
\centering
\subfloat[]{\label{main:a}\includegraphics[scale=.30]{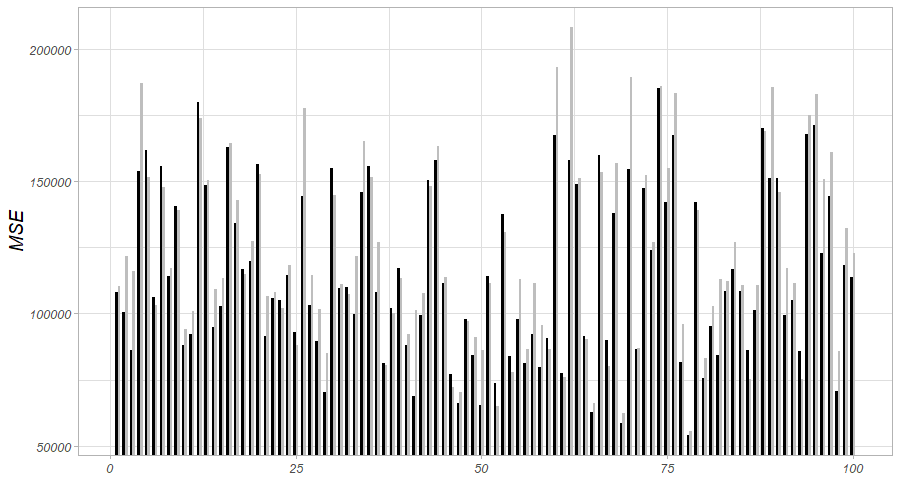}}
\end{minipage}%
\begin{minipage}{.5\linewidth}
\centering
\subfloat[]{\label{main:b}\includegraphics[scale=.30
]{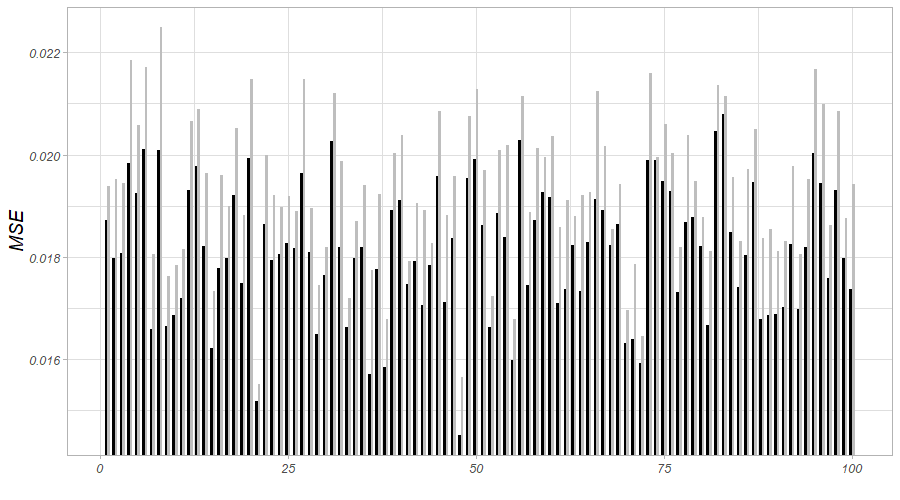}}
\end{minipage}\par\medskip
\centering
\subfloat[]{\label{main:c}\includegraphics[scale=.40]{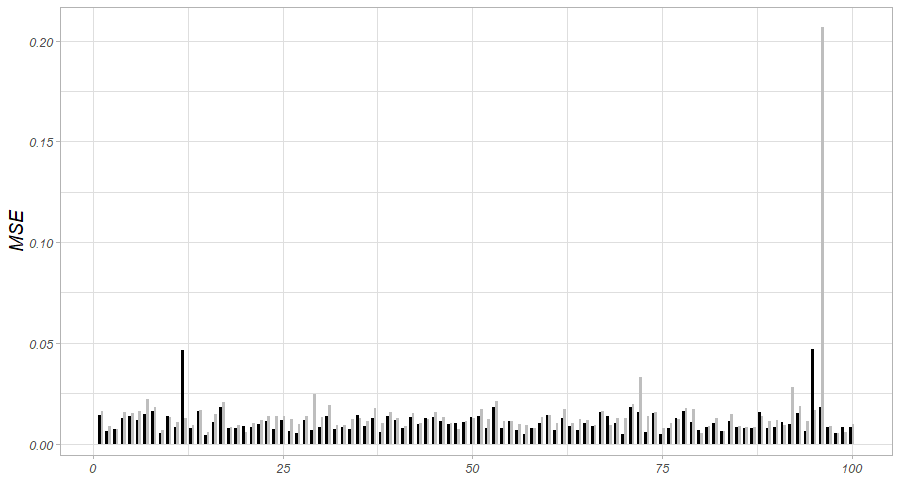}}
\caption{Comparison of the prediction performance between the \textit{best path algorithm} (black lines) and the \textit{LASSO} method (grey lines).
(a) \textit{Salary} model, (b) \textit{Violent Crime per population} model, (c)  \textit{Price level ratio} model }
\label{Prediction}
\end{figure}

\section{Conclusion}
\label{5}
In the last decade, the availability of the large dataset has made difficult the process of variable selection. Different strategies have been developed to mitigate the 'embarrassment of riches' in the choice of the variables \cite{eklund2007embarrassment, altman2018curse}. Most of them arise from machine learning or from statistical learning. And despite there has been an increasing trend in healthcare \cite{wiens2018machine},
economic \cite[among others]{athey2019machine,roth2004generalized}
and biological \cite{camacho2018next}
to leverage machine learning or statistical learning  for high-stakes prediction applications that deeply impact human lives, many of the machine learning and statistical learning models are black boxes that do not explain their predictions in a way that humans can understand the results \cite{rudin2019stop}. The novelty of this paper is to propose an automatic variable selection algorithm that makes use of Probabilistic Graphical Models  \cite{jordan2004graphical}, to detect the relevant explicative variables of a linear econometric model on the basis of a large dataset. In this way, we can combine the availability of large datasets with the interpretability of the econometric model,  obtaining models with a god accuracy also in prediction terms.   

\medskip

\bibliographystyle{plain} 

\newpage
\begin{appendices}
\section{Variable Name}

\begin{table}[H]\par\medskip
\centering
\scalebox{0.85}{
\begin{tabular}{|c|c|c|c|}
\hline
\textbf{Label Node} & \textbf{Variable}            & \textbf{Label Node} & \textbf{Variable}              \\ \hline
\textit{1}          & \textit{Community Name}      & \textit{53}         & \textit{PctIlleg}              \\ \hline
\textit{2}          & \textit{fold}                & \textit{54}         & \textit{NumImmig}              \\ \hline
\textit{3}          & \textit{population}          & \textit{55}         & \textit{PctImmigRecent}        \\ \hline
\textit{4}          & \textit{householdsize}       & \textit{56}         & \textit{PctImmigRec5}          \\ \hline
\textit{5}          & \textit{racepctblack}        & \textit{57}         & \textit{PctImmigRec8}          \\ \hline
\textit{6}          & \textit{racePctWhite}        & \textit{58}         & \textit{PctImmigRec10}         \\ \hline
\textit{7}          & \textit{racePctAsian}        & \textit{59}         & \textit{PctRecentImmig}        \\ \hline
\textit{8}          & \textit{racePctHisp}         & \textit{60}         & \textit{PctRecImmig5}          \\ \hline
\textit{9}          & \textit{agePct12t21}         & \textit{61}         & \textit{PctRecImmig8}          \\ \hline
\textit{10}         & \textit{agePct12t29}         & \textit{62}         & \textit{PctRecImmig10}         \\ \hline
\textit{11}         & \textit{agePct16t24}         & \textit{63}         & \textit{PctSpeakEnglOnly}      \\ \hline
\textit{12}         & \textit{agePct65up}          & \textit{64}         & \textit{PctNotSpeakEnglWell}   \\ \hline
\textit{13}         & \textit{numbUrban}           & \textit{65}         & \textit{PctLargHouseFam}       \\ \hline
\textit{14}         & \textit{pctUrban}            & \textit{66}         & \textit{PctLargHouseOccup}     \\ \hline
\textit{15}         & \textit{medIncome}           & \textit{67}         & \textit{PersPerOccupHous}      \\ \hline
\textit{16}         & \textit{pctWWage}            & \textit{68}         & \textit{PersPerOwnOccHous}     \\ \hline
\textit{17}         & \textit{pctWFarmSelf}        & \textit{69}         & \textit{PersPerRentOccHous}    \\ \hline
\textit{18}         & \textit{pctWInvInc}          & \textit{70}         & \textit{PctPersOwnOccup}       \\ \hline
\textit{19}         & \textit{pctWSocSec}          & \textit{71}         & \textit{PctPersDenseHous}      \\ \hline
\textit{20}         & \textit{pctWPubAsst}         & \textit{72}         & \textit{PctHousLess3BR}        \\ \hline
\textit{21}         & \textit{pctWRetire}          & \textit{73}         & \textit{MedNumBR}              \\ \hline
\textit{22}         & \textit{medFamInc}           & \textit{74}         & \textit{HousVacant}            \\ \hline
\textit{23}         & \textit{perCapInc}           & \textit{75}         & \textit{PctHousOccup}          \\ \hline
\textit{24}         & \textit{whitePerCap}         & \textit{76}         & \textit{PctHousOwnOcc}         \\ \hline
\textit{25}         & \textit{blackPerCap}         & \textit{77}         & \textit{PctVacantBoarded}      \\ \hline
\textit{26}         & \textit{indianPerCap}        & \textit{78}         & \textit{PctVacMore6Mos}        \\ \hline
\textit{27}         & \textit{AsianPerCap}         & \textit{79}         & \textit{MedYrHousBuilt}        \\ \hline
\textit{28}         & \textit{OtherPerCap}         & \textit{80}         & \textit{PctHousNoPhone}        \\ \hline
\textit{29}         & \textit{HispPerCap}          & \textit{81}         & \textit{PctWOFullPlumb}        \\ \hline
\textit{30}         & \textit{NumUnderPov}         & \textit{82}         & \textit{OwnOccLowQuart}        \\ \hline
\textit{31}         & \textit{PctPopUnderPov}      & \textit{83}         & \textit{OwnOccMedVal}          \\ \hline
\textit{32}         & \textit{PctLess9thGrade}     & \textit{84}         & \textit{OwnOccHiQuart}         \\ \hline
\textit{33}         & \textit{PctNotHSGrad}        & \textit{85}         & \textit{RentLowQ}              \\ \hline
\textit{34}         & \textit{PctBSorMore}         & \textit{86}         & \textit{RentMedian}            \\ \hline
\textit{35}         & \textit{PctUnemployed}       & \textit{87}         & \textit{RentHighQ}             \\ \hline
\textit{36}         & \textit{PctEmploy}           & \textit{88}         & \textit{MedRent}               \\ \hline
\textit{37}         & \textit{PctEmplManu}         & \textit{89}         & \textit{MedRentPctHousInc}     \\ \hline
\textit{38}         & \textit{PctEmplProfServ}     & \textit{90}         & \textit{MedOwnCostPctInc}      \\ \hline
\textit{39}         & \textit{PctOccupManu}        & \textit{91}         & \textit{MedOwnCostPctIncNoMtg} \\ \hline
\textit{40}         & \textit{PctOccupMgmtProf}    & \textit{92}         & \textit{NumInShelters}         \\ \hline
\textit{41}         & \textit{MalePctDivorce}      & \textit{93}         & \textit{NumStreet}             \\ \hline
\textit{42}         & \textit{MalePctNevMarr}      & \textit{94}         & \textit{PctForeignBorn}        \\ \hline
\textit{43}         & \textit{FemalePctDiv}        & \textit{95}         & \textit{PctBornSameState}      \\ \hline
\textit{44}         & \textit{TotalPctDiv}         & \textit{96}         & \textit{PctSameHouse85}        \\ \hline
\textit{45}         & \textit{PersPerFam}          & \textit{97}         & \textit{PctSameCity85}         \\ \hline
\textit{46}         & \textit{PctFam2Par}          & \textit{98}         & \textit{PctSameState85}        \\ \hline
\textit{47}         & \textit{PctKids2Par}         & \textit{99}         & \textit{LandArea}              \\ \hline
\textit{48}         & \textit{PctYoungKids2Par}    & \textit{100}        & \textit{PopDens}               \\ \hline
\textit{49}         & \textit{PctTeen2Par}         & \textit{101}        & \textit{PctUsePubTrans}        \\ \hline
\textit{50}         & \textit{PctWorkMomYoungKids} & \textit{102}        & \textit{LemasPctOfficDrugUn}   \\ \hline
\textit{51}         & \textit{PctWorkMom}          & \textit{103}        & \textit{ViolentCrimesPerPop}   \\ \hline
\textit{52}         & \textit{NumIlleg}            & \textit{}           & \textit{}                      \\ \hline
\end{tabular}
}
\caption{Name of the variables \textit{Communities and Crime} dataset}
\label{NameVar}
\end{table}
\newpage

\begin{table}[H]\par\medskip
\centering
\scalebox{0.55}{
\begin{tabular}{|c|c|c|c|}
\hline
\textbf{Label Node} & \textbf{Variables}                                                                                        & \textbf{Label Node} & \textbf{Variables}                                                                  \\ \hline
\textit{1}     & \textit{Age dependency ratio (\% of working-age population)}                                              & \textit{68}    & \textit{Population ages 15-19, female (\% of female population)}                    \\ \hline
\textit{2}     & \textit{Age dependency ratio, old (\% of working-age population)}                                         & \textit{69}    & \textit{Population ages 15-19, male (\% of male population)}                        \\ \hline
\textit{3}     & \textit{Age dependency ratio, young (\% of working-age population)}                                       & \textit{70}    & \textit{Population ages 15-64 (\% of total population)}                             \\ \hline
\textit{4}     & \textit{Agriculture, forestry, and fishing, value added (annual \% growth)}                               & \textit{71}    & \textit{Population ages 15-64, female}                                              \\ \hline
\textit{5}     & \textit{Agriculture, forestry, and fishing, value added (constant 2015 US\$)}                             & \textit{72}    & \textit{Population ages 15-64, female (\% of female population)}                    \\ \hline
\textit{6}     & \textit{Agriculture, forestry, and fishing, value added (constant LCU)}                                   & \textit{73}    & \textit{Population ages 15-64, male}                                                \\ \hline
\textit{7}     & \textit{Compulsory education, duration (years)}                                                           & \textit{74}    & \textit{Population ages 15-64, male (\% of male population)}                        \\ \hline
\textit{8}     & \textit{DEC alternative conversion factor (LCU per US\$)}                                                 & \textit{75}    & \textit{Population ages 15-64, total}                                               \\ \hline
\textit{9}     & \textit{Employment to population ratio, 15+, total (\%) (modeled ILO   estimate)}                         & \textit{76}    & \textit{Population ages 20-24, female (\% of female population)}                    \\ \hline
\textit{10}    & \textit{Forest area (\% of land area)}                                                                    & \textit{77}    & \textit{Population ages 20-24, male (\% of male population)}                        \\ \hline
\textit{11}    & \textit{Forest area (sq. km)}                                                                             & \textit{78}    & \textit{Population ages 25-29, female (\% of female population)}                    \\ \hline
\textit{12}    & \textit{GDP (constant 2015 US\$)}                                                                         & \textit{79}    & \textit{Population ages 25-29, male (\% of male population)}                        \\ \hline
\textit{13}    & \textit{GDP (constant LCU)}                                                                               & \textit{80}    & \textit{Population ages 30-34, female (\% of female population)}                    \\ \hline
\textit{14}    & \textit{GDP (current LCU)}                                                                                & \textit{81}    & \textit{Population ages 30-34, male (\% of male population)}                        \\ \hline
\textit{15}    & \textit{GDP (current US\$)}                                                                               & \textit{82}    & \textit{Population ages 35-39, female (\% of female population)}                    \\ \hline
\textit{16}    & \textit{GDP deflator (base year varies by country)}                                                       & \textit{83}    & \textit{Population ages 35-39, male (\% of male population)}                        \\ \hline
\textit{17}    & \textit{GDP growth (annual \%)}                                                                           & \textit{84}    & \textit{Population ages 40-44, female (\% of female population)}                    \\ \hline
\textit{18}    & \textit{GDP per capita (constant 2015 US\$)}                                                              & \textit{85}    & \textit{Population ages 40-44, male (\% of male population)}                        \\ \hline
\textit{19}    & \textit{GDP per capita (constant LCU)}                                                                    & \textit{86}    & \textit{Population ages 45-49, female (\% of female population)}                    \\ \hline
\textit{20}    & \textit{GDP per capita (current LCU)}                                                                     & \textit{87}    & \textit{Population ages 45-49, male (\% of male population)}                        \\ \hline
\textit{21}    & \textit{GDP per capita (current US\$)}                                                                    & \textit{88}    & \textit{Population ages 50-54, female (\% of female population)}                    \\ \hline
\textit{22}    & \textit{GDP per capita growth (annual \%)}                                                                & \textit{89}    & \textit{Population ages 50-54, male (\% of male population)}                        \\ \hline
\textit{23}    & \textit{GDP per capita, PPP (constant 2017 international \$)}                                             & \textit{90}    & \textit{Population ages 55-59, female (\% of female population)}                    \\ \hline
\textit{24}    & \textit{GDP per person employed (constant 2017 PPP \$)}                                                   & \textit{91}    & \textit{Population ages 55-59, male (\% of male population)}                        \\ \hline
\textit{25}    & \textit{GDP, PPP (constant 2017 international \$)}                                                        & \textit{92}    & \textit{Population ages 60-64, female (\% of female population)}                    \\ \hline
\textit{26}    & \textit{GNI (current LCU)}                                                                                & \textit{93}    & \textit{Population ages 60-64, male (\% of male population)}                        \\ \hline
\textit{27}    & \textit{GNI (current US\$)}                                                                               & \textit{94}    & \textit{Population ages 65 and above (\% of total population)}                      \\ \hline
\textit{28}    & \textit{GNI per capita (current LCU)}                                                                     & \textit{95}    & \textit{Population ages 65 and above, female}                                       \\ \hline
\textit{29}    & \textit{GNI per capita, Atlas method (current US\$)}                                                      & \textit{96}    & \textit{Population ages 65 and above, female (\% of female population)}             \\ \hline
\textit{30}    & \textit{GNI, Atlas method (current US\$)}                                                                 & \textit{97}    & \textit{Population ages 65 and above, male}                                         \\ \hline
\textit{31}    & \textit{Human capital index (HCI) (scale 0-1)}                                                            & \textit{98}    & \textit{Population ages 65 and above, male (\% of male population)}                 \\ \hline
\textit{32}    & \textit{Human capital index (HCI), lower bound (scale 0-1)}                                               & \textit{99}    & \textit{Population ages 65 and above, total}                                        \\ \hline
\textit{33}    & \textit{Human capital index (HCI), upper bound (scale 0-1)}                                               & \textit{100}   & \textit{Population ages 65-69, female (\% of female population)}                    \\ \hline
\textit{34}    & \textit{Inflation, GDP deflator (annual \%)}                                                              & \textit{101}   & \textit{Population ages 65-69, male (\% of male population)}                        \\ \hline
\textit{35}    & \textit{Labor force participation rate, total (\% of total population ages 15+)   (modeled ILO estimate)} & \textit{102}   & \textit{Population ages 70-74, female (\% of female population)}                    \\ \hline
\textit{36}    & \textit{Labor force, total}                                                                               & \textit{103}   & \textit{Population ages 70-74, male (\% of male population)}                        \\ \hline
\textit{37}    & \textit{Land area (sq. km)}                                                                               & \textit{104}   & \textit{Population ages 75-79, female (\% of female population)}                    \\ \hline
\textit{38}    & \textit{Lower secondary school starting age (years)}                                                      & \textit{105}   & \textit{Population ages 75-79, male (\% of male population)}                        \\ \hline
\textit{39}    & \textit{Merchandise exports (current US\$)}                                                               & \textit{106}   & \textit{Population ages 80 and above, female (\% of female population)}             \\ \hline
\textit{40}    & \textit{Merchandise imports (current US\$)}                                                               & \textit{107}   & \textit{Population ages 80 and above, male (\% of male population)}                 \\ \hline
\textit{41}    & \textit{Merchandise trade (\% of GDP)}                                                                    & \textit{108}   & \textit{Population density (people per sq. km of land area)}                        \\ \hline
\textit{42}    & \textit{Military expenditure (\% of general government expenditure)}                                      & \textit{109}   & \textit{Population growth (annual \%)}                                              \\ \hline
\textit{43}    & \textit{Military expenditure (current USD)}                                                               & \textit{110}   & \textit{Population, female}                                                         \\ \hline
\textit{44}    & \textit{Official exchange rate (LCU per US\$, period average)}                                            & \textit{111}   & \textit{Population, female (\% of total population)}                                \\ \hline
\textit{45}    & \textit{People practicing open defecation (\% of population)}                                             & \textit{112}   & \textit{Population, male}                                                           \\ \hline
\textit{46}    & \textit{People practicing open defecation, rural (\% of rural population)}                                & \textit{113}   & \textit{Population, male (\% of total population)}                                  \\ \hline
\textit{47}    & \textit{People practicing open defecation, urban (\% of urban population)}                                & \textit{114}   & \textit{Population, total}                                                          \\ \hline
\textit{48}    & \textit{People using at least basic drinking water services (\% of   population)}                         & \textit{115}   & \textit{PPP conversion factor, GDP (LCU per international \$)}                      \\ \hline
\textit{49}    & \textit{People using at least basic drinking water services, rural (\% of rural   population)}            & \textit{116}   & \textit{Preprimary education, duration (years)}                                     \\ \hline
\textit{50}    & \textit{People using at least basic drinking water services, urban (\% of urban   population)}            & \textit{117}   & \textit{Price level ratio of PPP conversion factor (GDP) to market exchange   rate} \\ \hline
\textit{51}    & \textit{People using at least basic sanitation services (\% of population)}                               & \textit{118}   & \textit{Primary education, duration (years)}                                        \\ \hline
\textit{52}    & \textit{People using at least basic sanitation services, rural (\% of rural   population)}                & \textit{119}   & \textit{Primary school starting age (years)}                                        \\ \hline
\textit{53}    & \textit{People using at least basic sanitation services, urban (\% of urban   population)}                & \textit{120}   & \textit{Proportion of seats held by women in national parliaments (\%)}             \\ \hline
\textit{54}    & \textit{Personal remittances, received (\% of GDP)}                                                       & \textit{121}   & \textit{Refugee population by country or territory of asylum}                       \\ \hline
\textit{55}    & \textit{Personal remittances, received (current US\$)}                                                    & \textit{122}   & \textit{Refugee population by country or territory of origin}                       \\ \hline
\textit{56}    & \textit{Population ages 00-04, female (\% of female population)}                                          & \textit{123}   & \textit{Rural population}                                                           \\ \hline
\textit{57}    & \textit{Population ages 00-04, male (\% of male population)}                                              & \textit{124}   & \textit{Rural population (\% of total population)}                                  \\ \hline
\textit{58}    & \textit{Population ages 0-14 (\% of total population)}                                                    & \textit{125}   & \textit{Rural population growth (annual \%)}                                        \\ \hline
\textit{59}    & \textit{Population ages 0-14, female}                                                                     & \textit{126}   & \textit{Secondary education, duration (years)}                                      \\ \hline
\textit{60}    & \textit{Population ages 0-14, female (\% of female population)}                                           & \textit{127}   & \textit{Secure Internet servers}                                                    \\ \hline
\textit{61}    & \textit{Population ages 0-14, male {[}SP.POP.0014.MA.IN{]}}                                               & \textit{128}   & \textit{Secure Internet servers (per 1 million people)}                             \\ \hline
\textit{62}    & \textit{Population ages 0-14, male (\% of male population)}                                               & \textit{129}   & \textit{Unemployment, total (\% of total labor force) (modeled ILO estimate)}       \\ \hline
\textit{63}    & \textit{Population ages 0-14, total}                                                                      & \textit{130}   & \textit{Urban population}                                                           \\ \hline
\textit{64}    & \textit{Population ages 05-09, female (\% of female population)}                                          & \textit{131}   & \textit{Urban population (\% of total population)}                                  \\ \hline
\textit{65}    & \textit{Population ages 05-09, male (\% of male population)}                                              & \textit{132}   & \textit{Urban population growth (annual \%)}                                        \\ \hline
\textit{66}    & \textit{Population ages 10-14, female (\% of female population)}                                          & \textit{133}   & \textit{Women Business and the Law Index Score (scale 1-100)}                       \\ \hline
\textit{67}    & \textit{Population ages 10-14, male (\% of male population)}                                              & \textit{}      & \textit{}                                                                           \\ \hline
\end{tabular}
\caption{Name of the variables \textit{World Development Indicators} dataset}
\label{NameVar1}
}

\end{table}
\section{Availability}
The analyses were performed using the R library
gRapHD which the authors have made available to the R community
via the CRAN repository \cite{de2009high} (de Abreu GCG,
Labouriau R, Edwards D: High-dimensional Graphical
Model Search with gRapHD R package, submitted to J.
Stat. Software).

\end{appendices}
\end{document}